\documentclass[11pt]{article}

\usepackage[preprint]{acl}

\usepackage{times}
\usepackage{latexsym}

\usepackage[T1]{fontenc}

\usepackage[utf8]{inputenc}

\usepackage{microtype}

\usepackage{inconsolata}

\usepackage{bm}
\usepackage{amsmath, amssymb, amsfonts, amsthm}
\usepackage{graphicx}
\usepackage{booktabs}
\usepackage{multirow}
\usepackage{amsfonts}
\usepackage{color,xcolor,colortbl}
\usepackage{adjustbox}
\usepackage{tabularx}
\usepackage{subcaption}
\usepackage{cleveref}
\usepackage{makecell}
\usepackage{enumitem}
\usepackage{mathtools}
\usepackage{hyperref}
\usepackage{paralist}
\usepackage{thmtools, thm-restate}

\declaretheorem{lemma, proposition, corollary}[
    style=plain
]
\newcommand{\headercolor}{\rowcolor{blue!10}}
\newcommand{\headercolorgray}{\rowcolor{gray!10}}
\definecolor{darkgreen}{RGB}{0,130,0}
\definecolor{darkred}{RGB}{160,0,0}

\title{Towards Reliable Truth-Aligned Uncertainty Estimation in\\Large Language Models}

\author{
  Ponhvoan Srey$^1$ \enskip Quang Minh Nguyen$^2$ \enskip Xiaobao Wu$^{3}$\thanks{Corresponding Authors.} \enskip Anh Tuan Luu$^{1}$\footnotemark[1]\\
  $^1$Nanyang Technological University, Singapore \quad $^2$KAIST, South Korea\\ $^3$Shanghai Jiao Tong University\\
  \texttt{\{ponhvoan002, anhtuan.luu\}@ntu.edu.sg}\\
  \texttt{qm.nguyen@kaist.ac.kr} \quad  \texttt{xiaobaowu@sjtu.edu.cn}
  }

\begin{document}
\maketitle
\begin{abstract}
Uncertainty estimation (UE) aims to detect hallucinated outputs of large language models (LLMs) to improve their reliability.
However, UE metrics often exhibit unstable performance across configurations, which significantly limits their applicability.
In this work, we formalise this phenomenon as \emph{proxy failure}, since most UE metrics originate from model behaviour, rather than being explicitly grounded in the factual correctness of LLM outputs.
With this, we show that UE metrics become non-discriminative precisely in low-information regimes. 
To alleviate this, we propose \emph{Truth AnChoring} (TAC), a post-hoc calibration method to remedy UE metrics, by mapping the raw scores to truth-aligned scores.
Even with noisy and few-shot supervision, our TAC can support the learning of well-calibrated uncertainty estimates, and presents a practical calibration protocol.
Our findings highlight the limitations of treating heuristic UE metrics as direct indicators of truth uncertainty, and position our TAC as a necessary step toward more reliable uncertainty estimation for LLMs.
The code repository is available at \url{https://github.com/ponhvoan/TruthAnchor/}.

\end{abstract}

\section{Introduction}

Large Language Models (LLMs) have advanced rapidly in capability \citep{achiam2023gpt, guo2025deepseek, yang2025qwen3} and been deployed across a broad range of applications,
including in high-risk domains, such as the medical \citep{nori2023capabilities, thirunavukarasu2023large, goh2024large} and financial fields \citep{wu2023bloomberggpt, lopez2023can}.
Yet, they still generate falsehood or hallucinate with high fluency and apparent confidence, making errors difficult to detect from the generations alone \citep{zhang2025siren}.
This motivates uncertainty estimation (UE).
UE aims to assign an uncertainty score to each response, and flag potential hallucinations.
This enables appropriate and timely interventions on the response, like abstention or verification \citep{liu2025uncertainty}.

However, recent studies show that UE metrics do not behave stably across configurations, with scores that appear effective in one setting degrading sharply in another \citep{vashurin-etal-2025-benchmarking, wang2025measuring, shorinwa2025survey, tan-etal-2025-consistent, tomov2025illusion}. We argue that this is because these UE metrics are not directly measuring correctness.
For example, predictive entropy quantifies uncertainty of the next-token distribution, and therefore, reflects local continuation stability rather than semantic truth.
In the same manner, many other UE metrics function as heuristics or \emph{proxies} to correctness as they may correlate with truth in some contexts, but that relationship is fragile.

In this work, we present a formal study of this phenomenon, which we call \textit{\textbf{proxy failure}}.
Our contributions are three-fold:
\begin{inparaenum}[(i)]
    \item We demonstrate explicitly the non-discriminability of UE scores precisely under low information regimes, and highlight how this can manifest in entropy as a concrete example.
    \item We can alleviate this issue through \emph{Truth AnChoring (TAC)}. By connecting uncertainty scores to correctness labels, they can be post-hoc calibrated to align with truth, yielding significant reductions in the expected calibration error (ECE) \Cref{fig:reliability_diagrams}.
    \item Finally, we provide a practical route for when gold annotations are scarce. Even weak truth signals, such as few-shot supervision and noisy verifier labels, are still useful for learning truth-aligned uncertainty estimates.
\end{inparaenum}
More broadly, our results caution that reliable UE cannot rest on heuristics alone. We position TAC as a necessary ingredient to inject an explicit notion of correctness so that uncertainty scores are well-aligned with factuality.
With these contributions, we hope to steer the field toward more principled post-hoc calibration.

\begin{figure*}[ht]
    \centering
    \setlength{\tabcolsep}{3pt}
    \renewcommand{\arraystretch}{1.2}
    
    \begin{tabular}{>{\centering\arraybackslash}m{0.8cm}
                    >{\centering\arraybackslash}m{0.16\textwidth}
                    >{\centering\arraybackslash}m{0.16\textwidth}
                    >{\centering\arraybackslash}m{0.16\textwidth}
                    >{\centering\arraybackslash}m{0.16\textwidth}
                    >{\centering\arraybackslash}m{0.16\textwidth}}
        
        \scriptsize \textbf{Vanilla} 
        & \includegraphics[width=\linewidth]{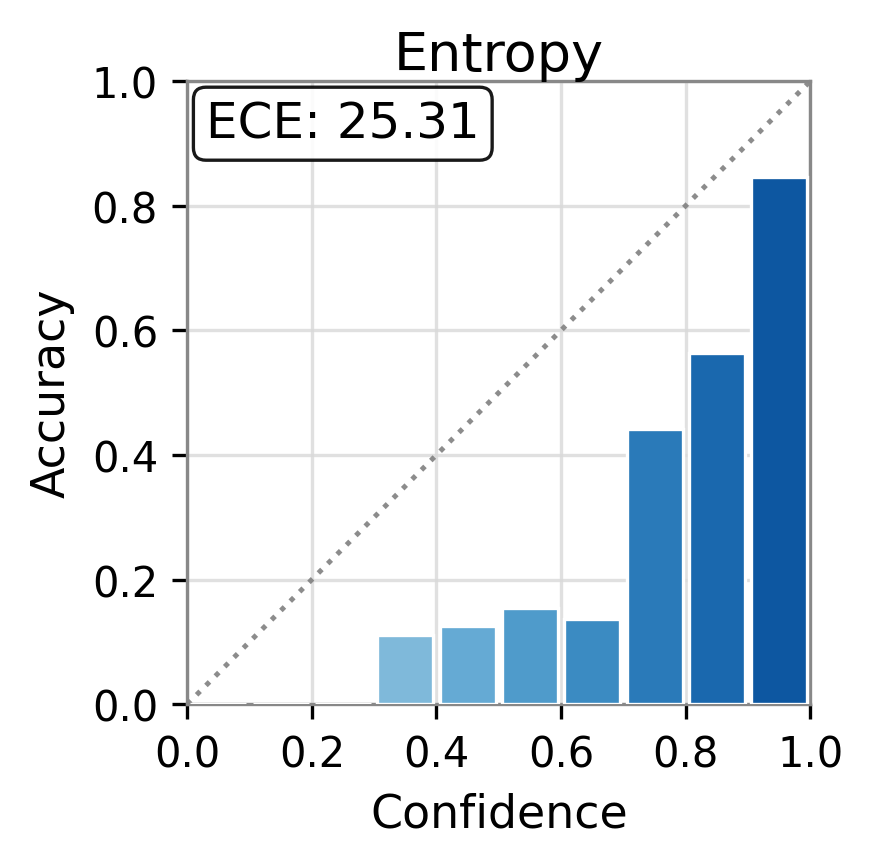}
        & \includegraphics[width=\linewidth]{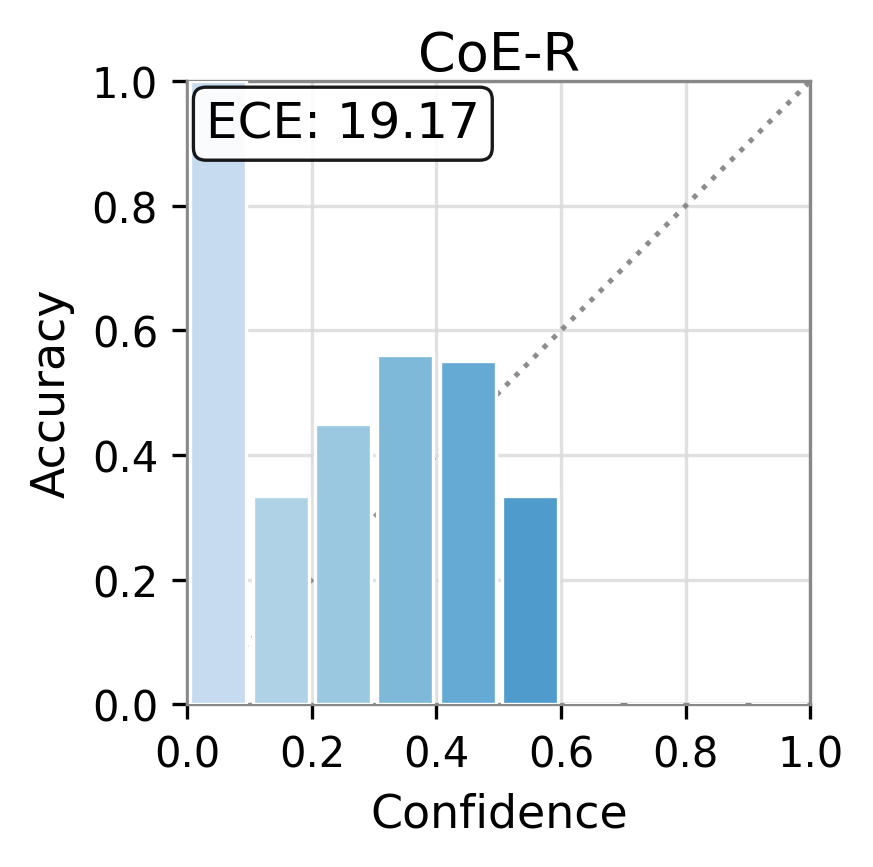}
        & \includegraphics[width=\linewidth]{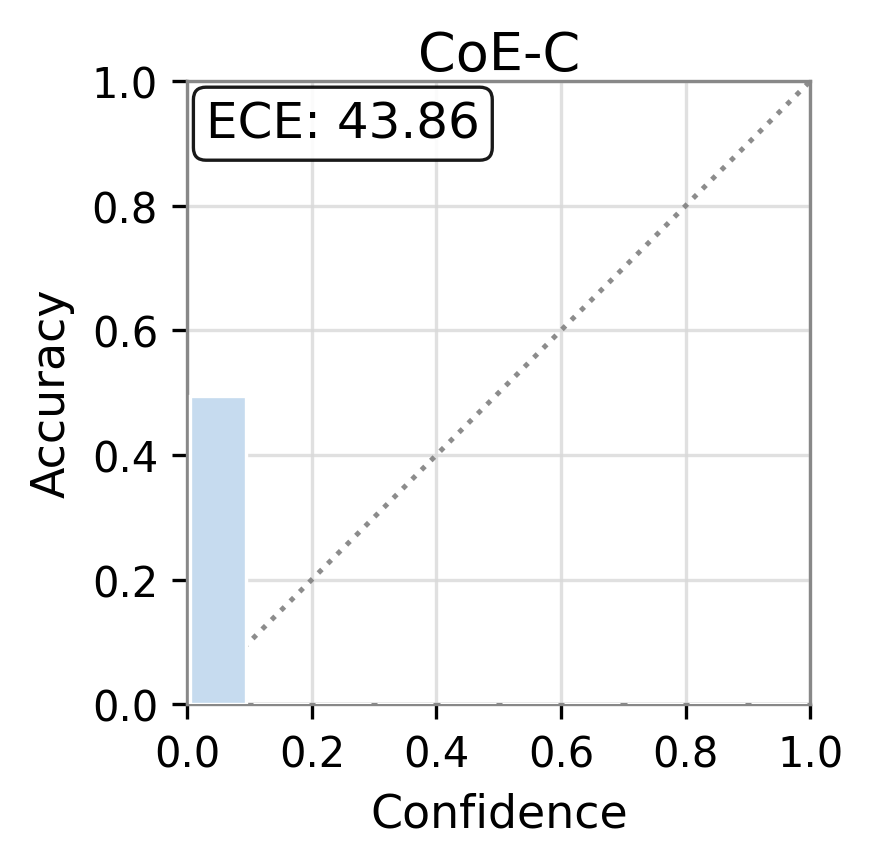}
        & \includegraphics[width=\linewidth]{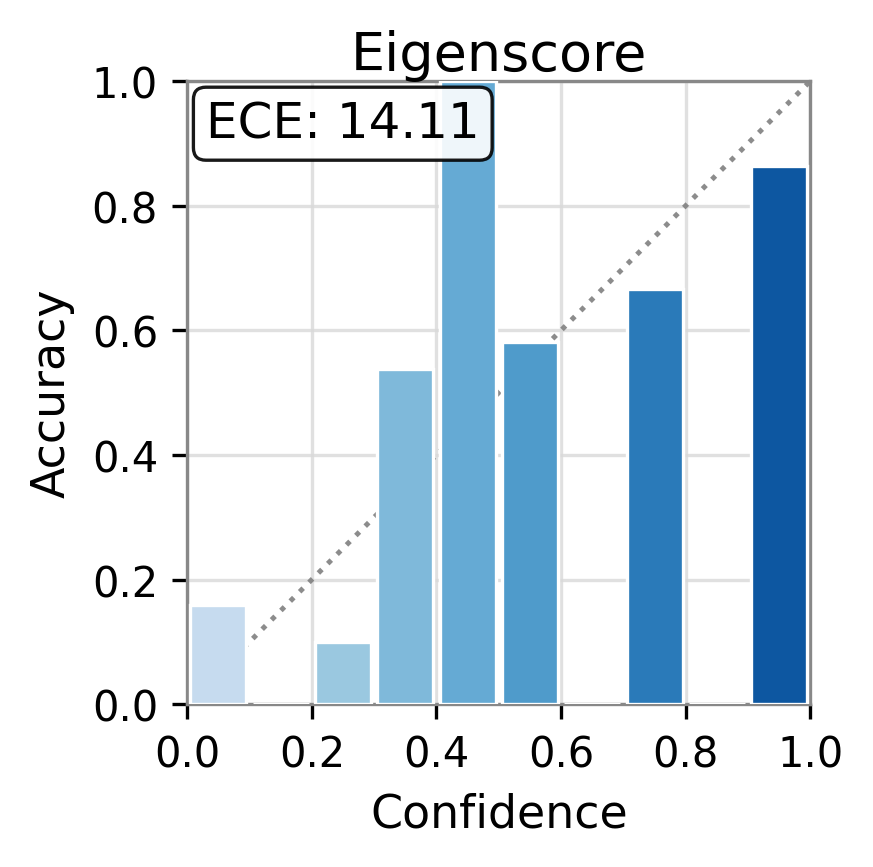}
        & \includegraphics[width=\linewidth]{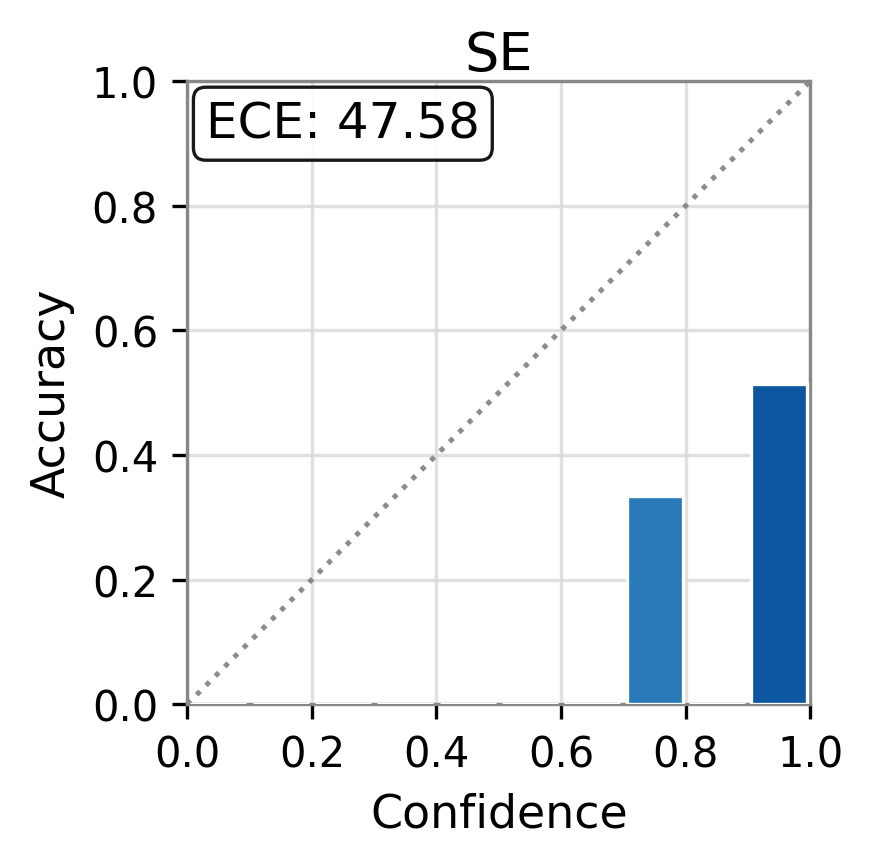}\\
        \scriptsize \textbf{TAC} 
        & \includegraphics[width=\linewidth]{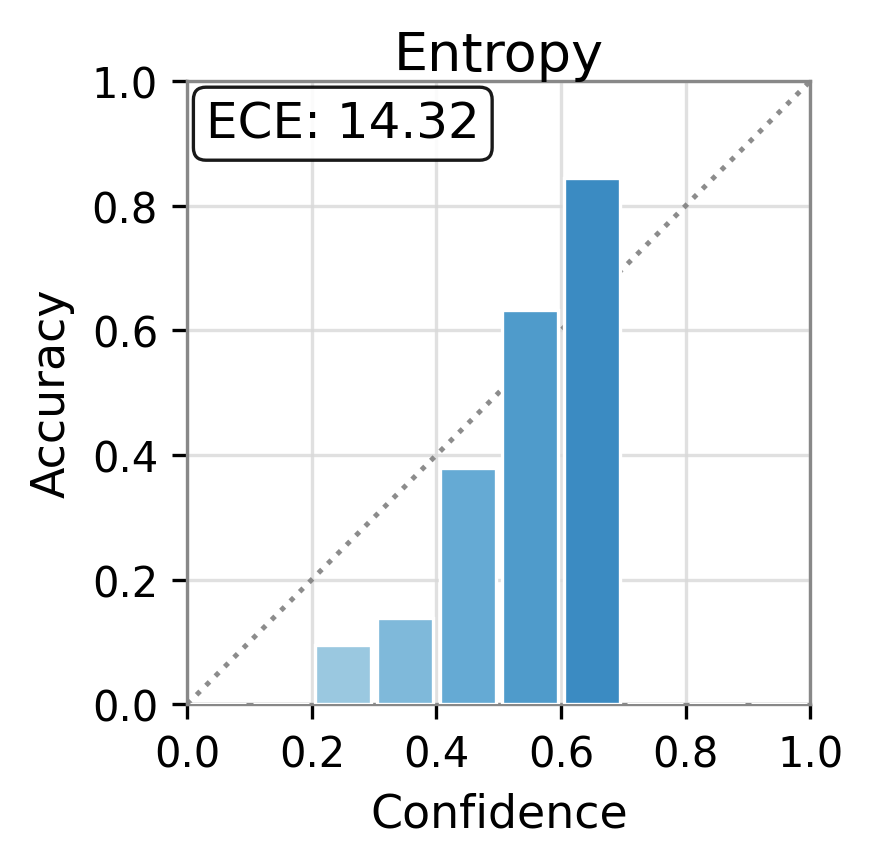}
        & \includegraphics[width=\linewidth]{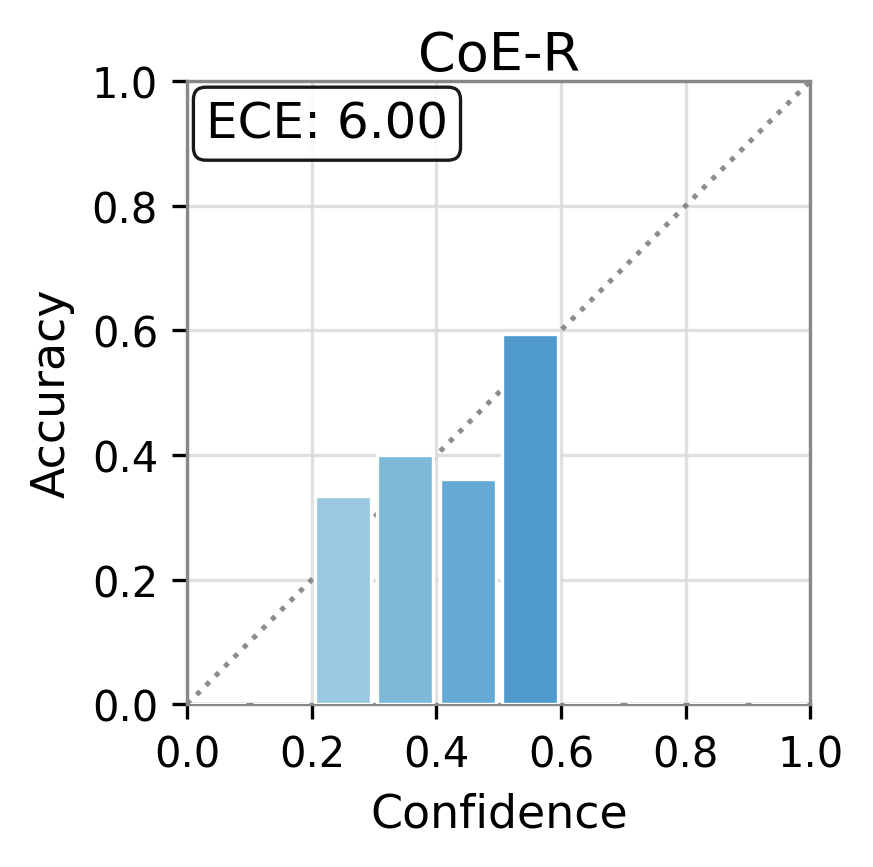}
        & \includegraphics[width=\linewidth]{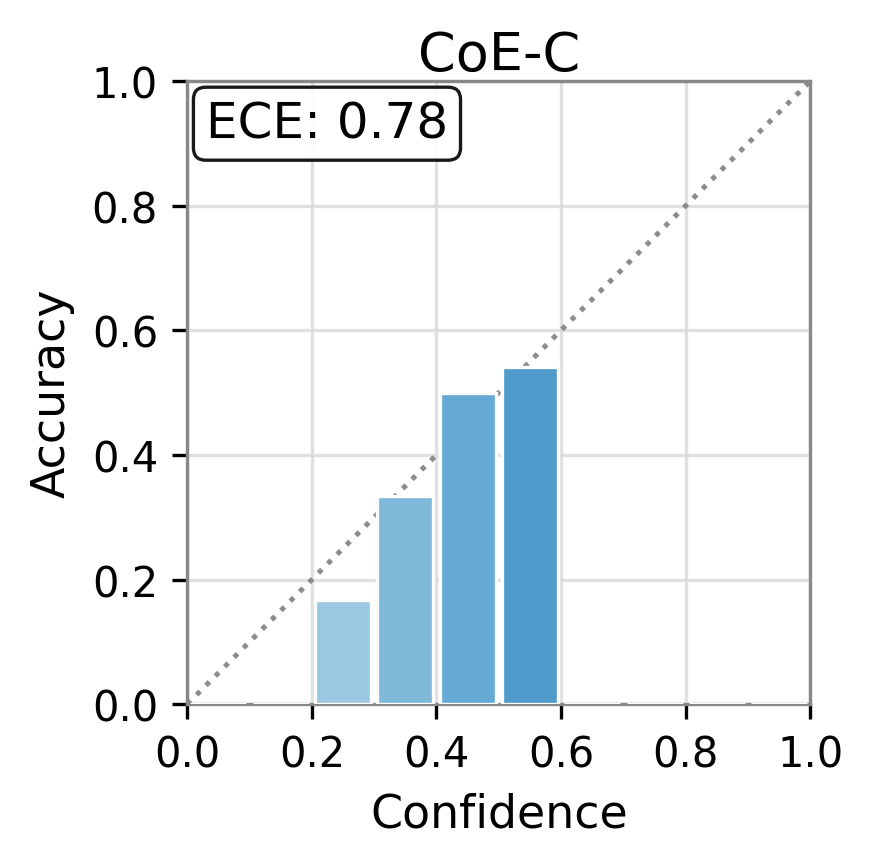}
        & \includegraphics[width=\linewidth]{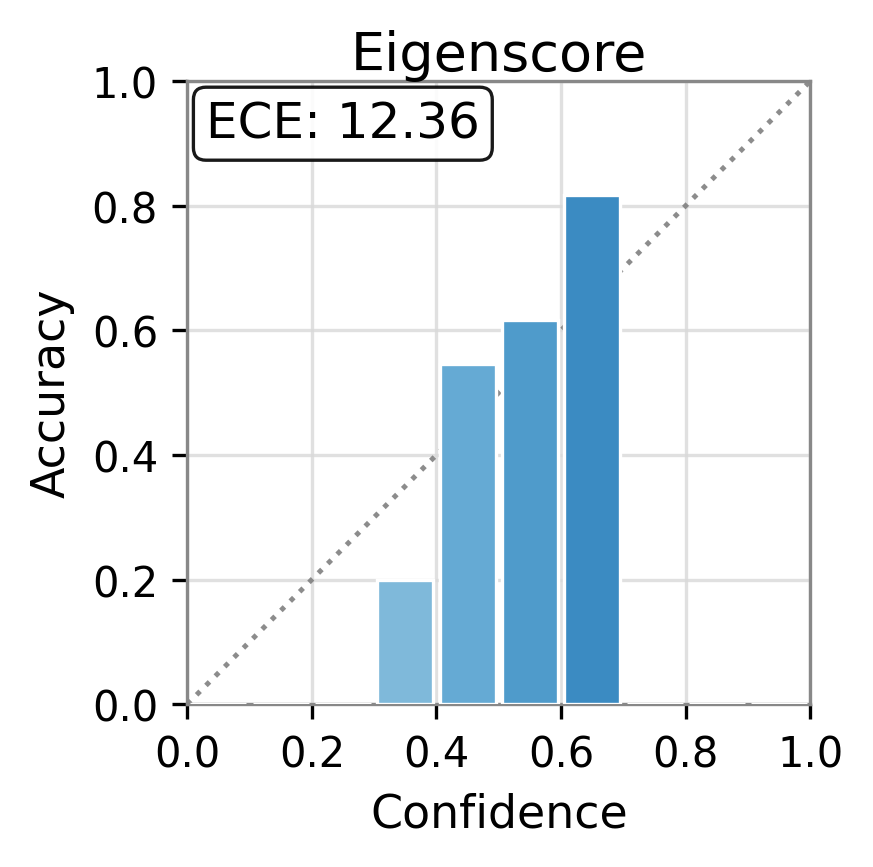}
        & \includegraphics[width=\linewidth]{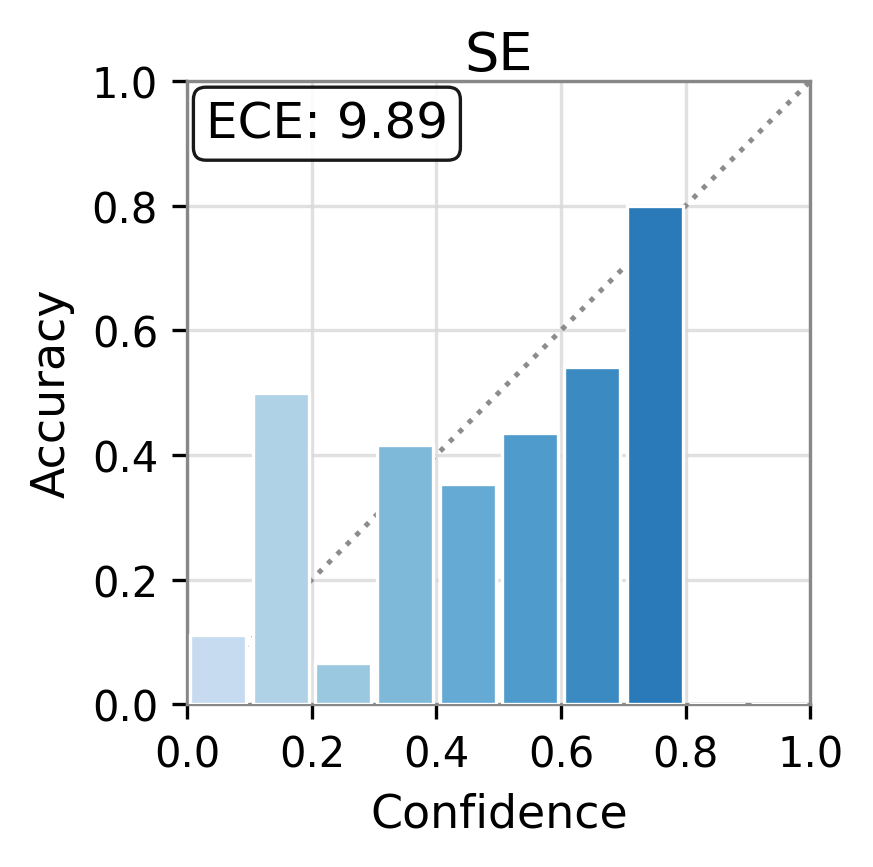}\\
    \end{tabular}
    \caption{Reliability diagrams of widely used and recent uncertainty methods, and our Truth-AnChored (TAC) scores.}
    \label{fig:reliability_diagrams}
\end{figure*}

\section{Related Work}

Existing UE methods typically derive uncertainty scores from the LLM's own behaviour.
Popular approaches examine the next-token probability distribution, and compute the uncertainty of this underlying distribution via
measures such as entropy \citep{huang2023look}, perplexity \citep{si2022prompting}, or their variants \citep{malinin2020uncertainty, shih2023long}.
Others rely on response consistency across multiple samples, both at the output token level \citep{manakul2023selfcheckgpt, kuhn2023semantic, lin2023generating} and representation level \citep{chen2024inside, li2025semantic}, with the assumption that if the model emits equivalent responses, then it has sufficient parametric knowledge and has generated a correct answer.
Others avoid sampling altogether due to its computational requirements, and extract signals from internal representations and formulates a measure to track uncertainty \citep{wang2024latent, sriramanan2024llm, yin2024characterizing}.
Evidently, the prevailing paradigm is to formulate a score that should, in principle, track factuality.
Unfortunately, as noted in recent works \citep{vashurin-etal-2025-benchmarking, shorinwa2025survey},
performance of such metrics vary across datasets.
Our work reveals these metrics can only serve as imperfect proxies to truth, and can fail to meaningfully represent uncertainty in general.

Closely related to our work are probe-based UE methods that train a lightweight classifier on LLM internal states to predict hallucination risk, resulting in strong uncertainty estimates \citep{burns2022discovering, azaria-mitchell-2023-internal, ji-etal-2024-llm, srey2025unsupervised}. 
Most efforts are focused on curating more informative features, such as Mahalanobis distance of embeddings \citep{vazhentsev-etal-2025-token}, lookback ratios over context \citep{chuang2024lookback}, and attention activation maps \citep{he-etal-2024-llm}.%
Our work contributes to and differs from this suite of methods in two ways. First, we establish that probe-based UE is principled and well-grounded, thus explaining their superior empirical performance.
Second, we emphasise a minimal and previously overlooked instantiation: post-hoc calibration of the raw uncertainty score itself.
Finally, most pertinent to our work is CUE \cite{li-etal-2025-towards}, which learns a corrector to adjust uncertainty scores using correctness supervision with additional BERT-based embedding features. Our work is complementary but distinct. We provide a formal account of \emph{why} such correction is needed, and empirically demonstrate even the scalar proxy alone can be turned to a truth-aligned estimator.

\begin{figure*}
    \centering
    \begin{subfigure}{0.32\textwidth}
    \includegraphics[width=\linewidth]{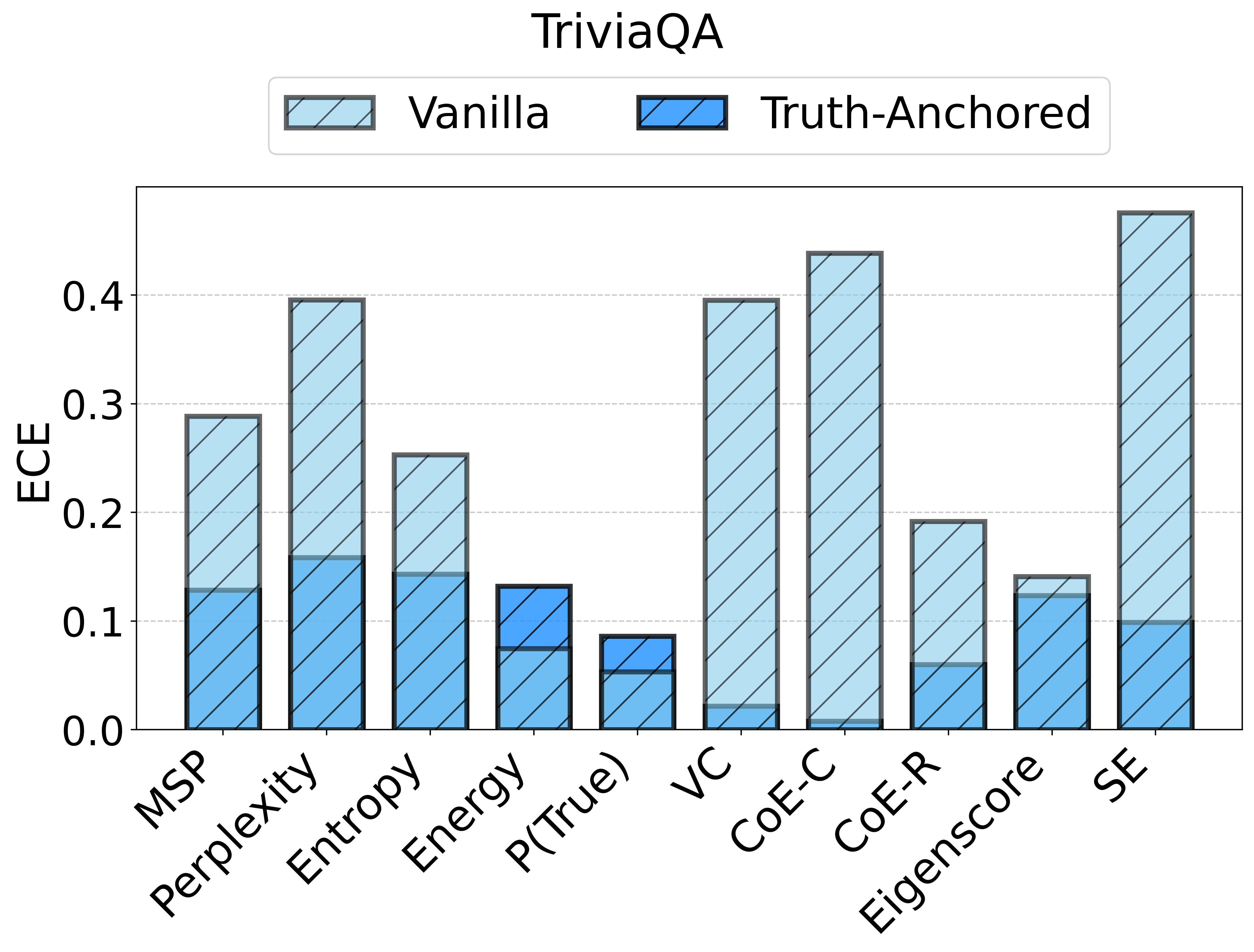}
    \end{subfigure}
    ~
    \begin{subfigure}{0.32\textwidth}
    \includegraphics[width=\linewidth]{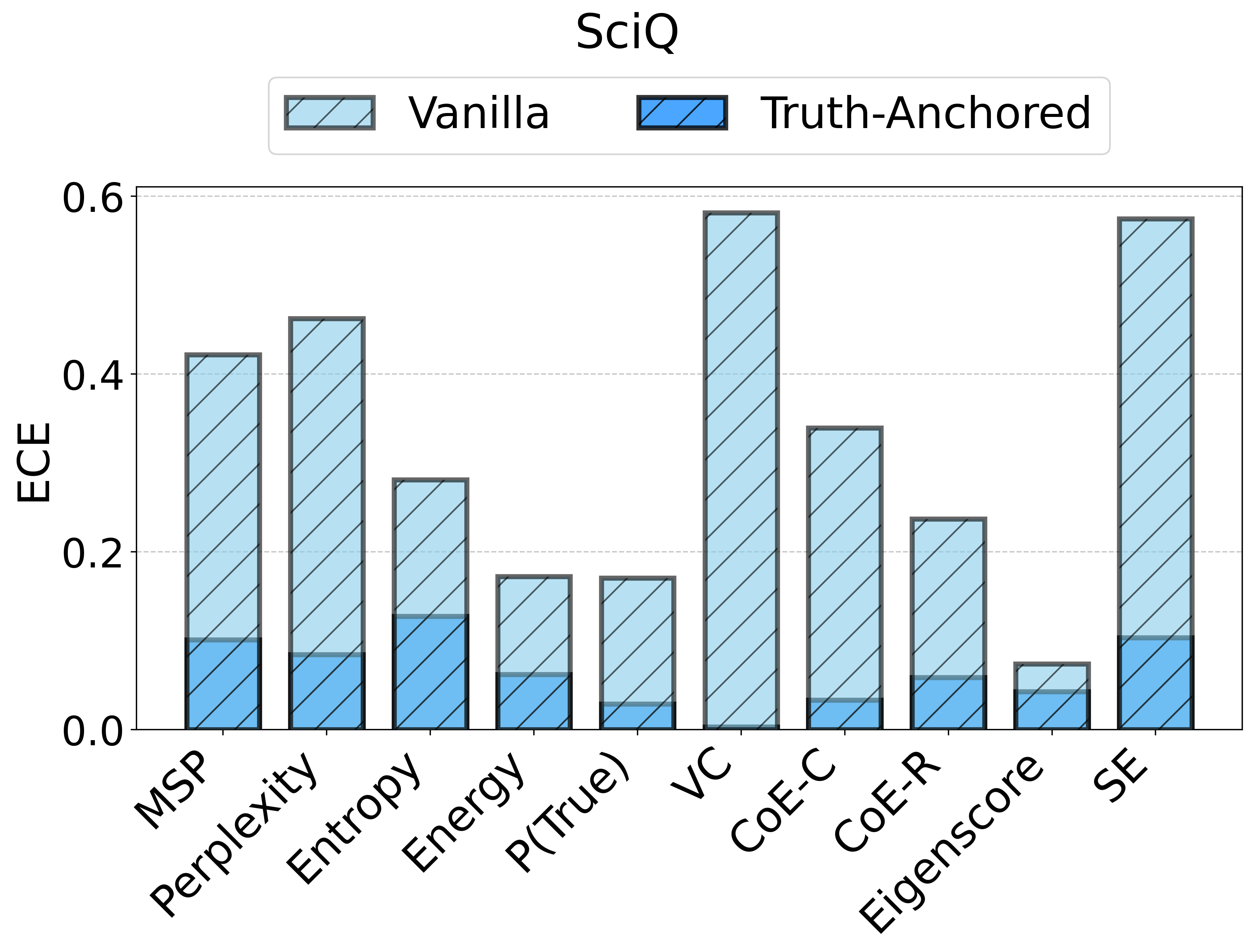}
    \end{subfigure}
    ~
    \begin{subfigure}{0.32\textwidth}
    \includegraphics[width=\linewidth]{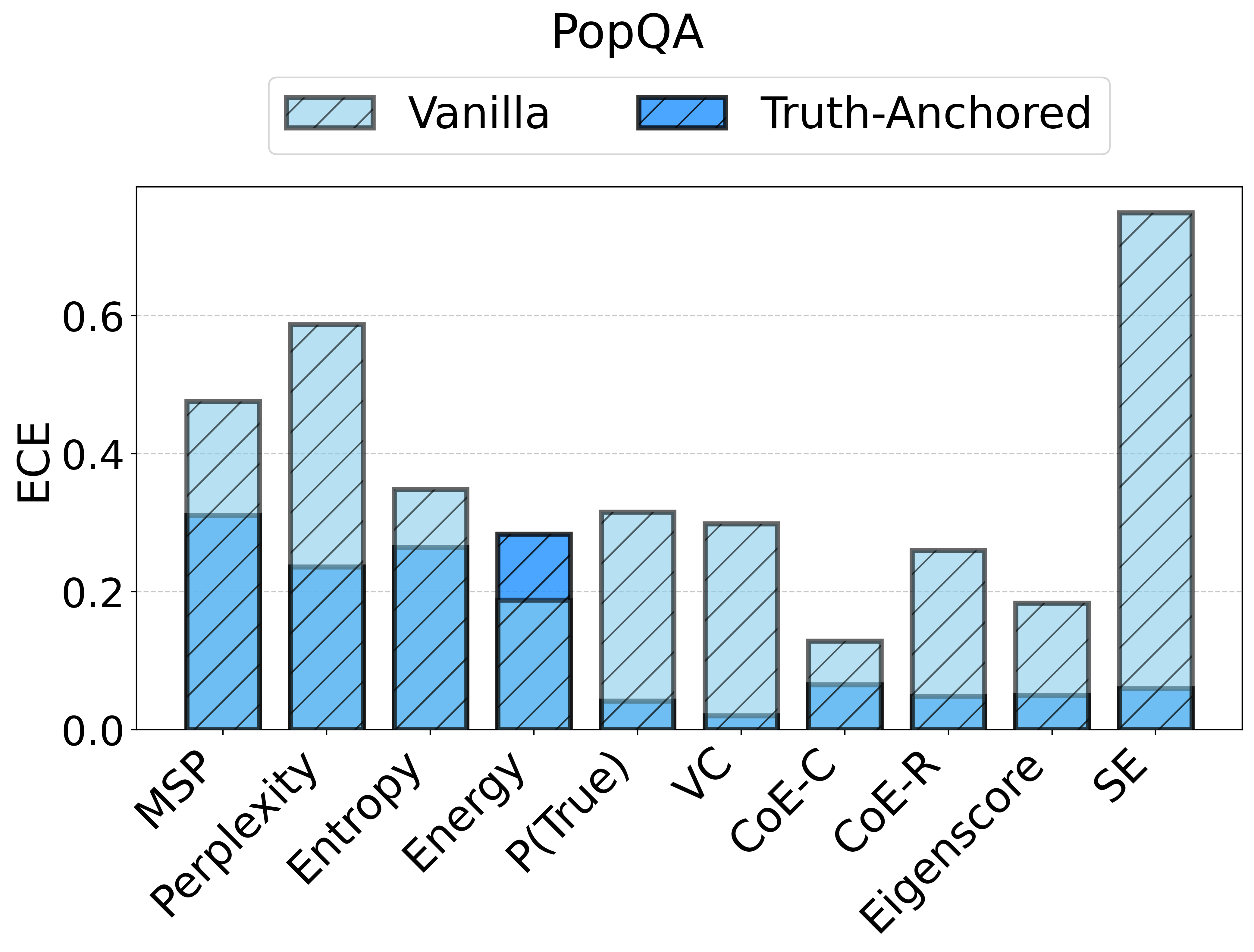}
    \end{subfigure}

    \begin{subfigure}{0.32\textwidth}
    \includegraphics[width=\linewidth]{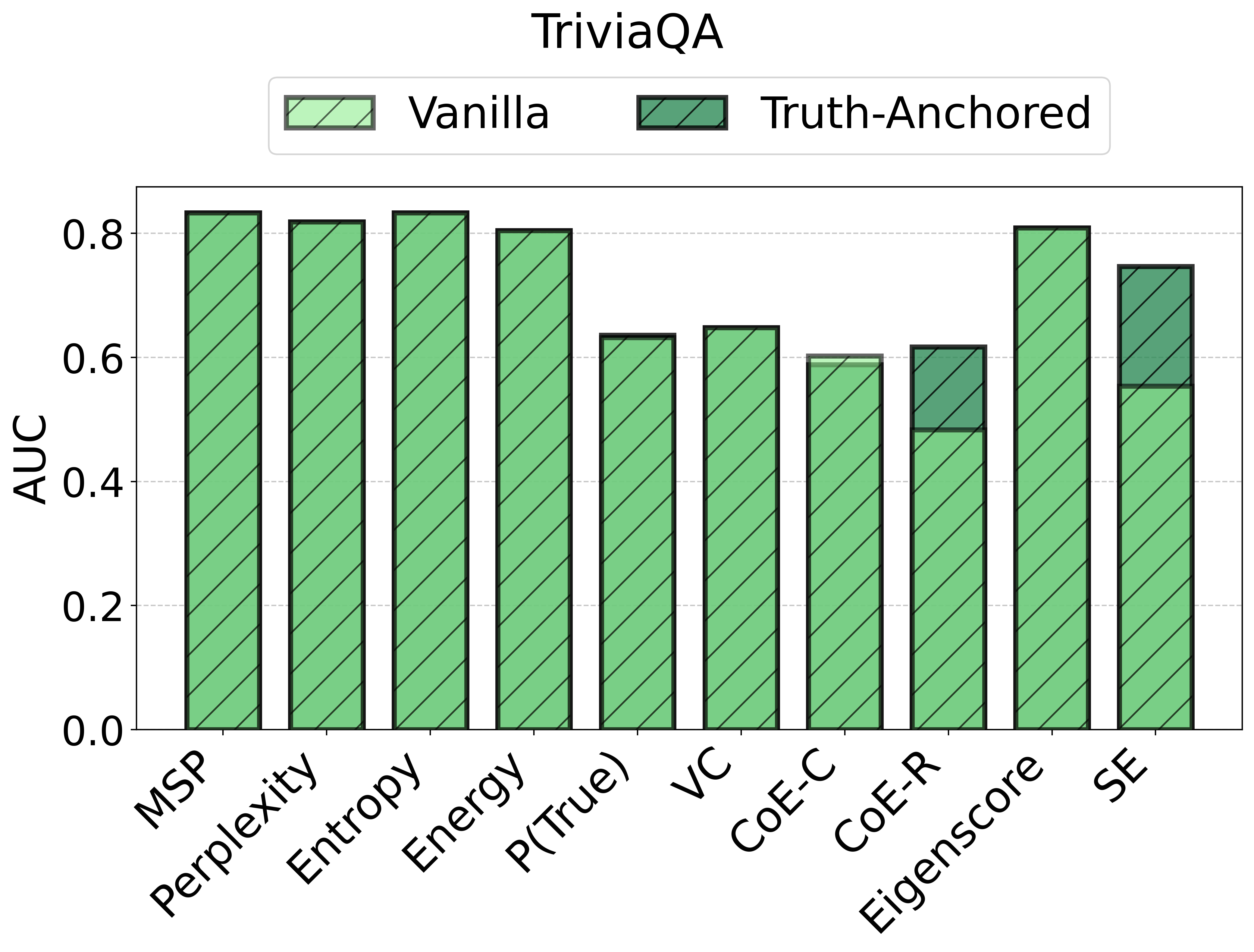}
    \end{subfigure}
    ~
    \begin{subfigure}{0.32\textwidth}
    \includegraphics[width=\linewidth]{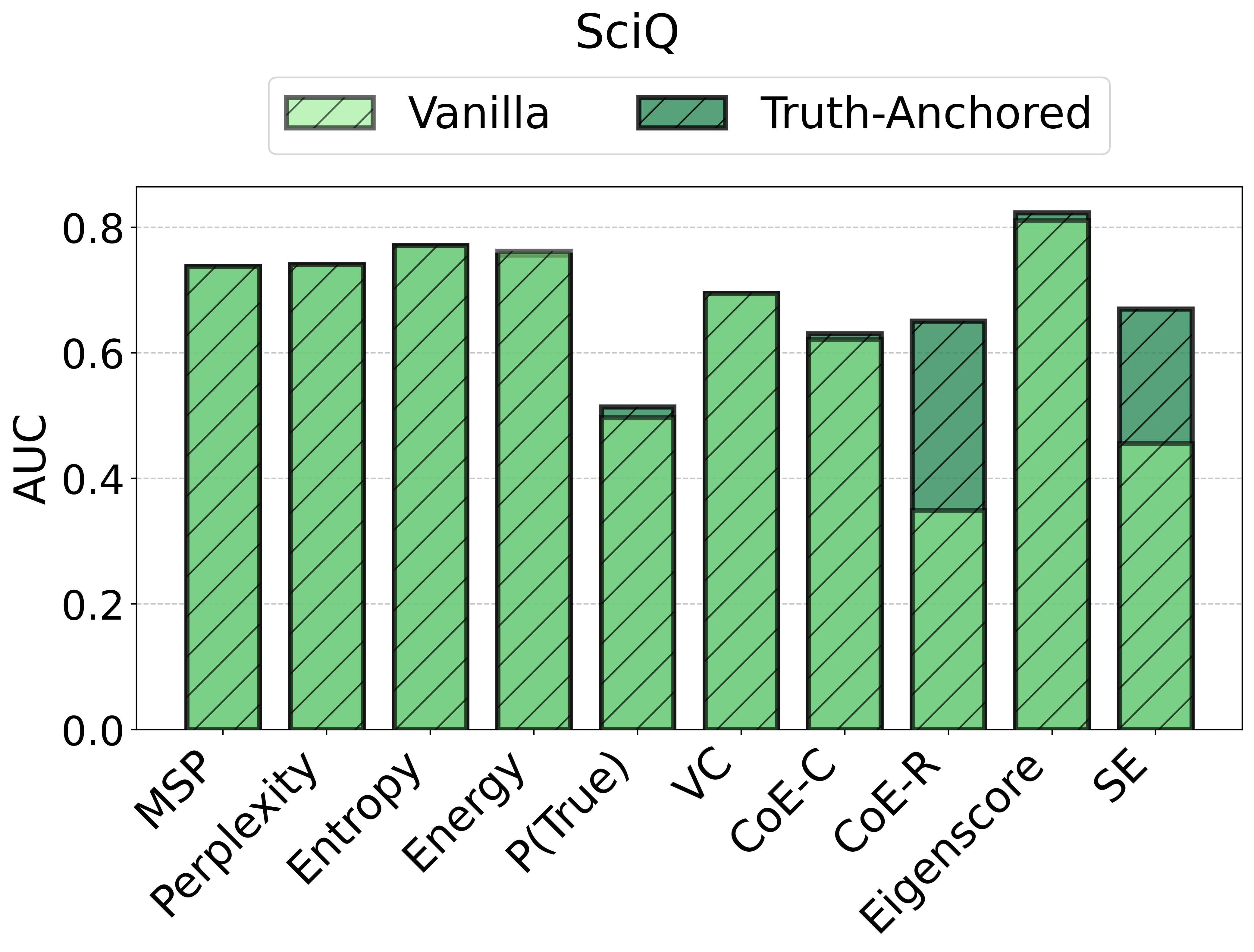}
    \end{subfigure}
    ~
    \begin{subfigure}{0.32\textwidth}
    \includegraphics[width=\linewidth]{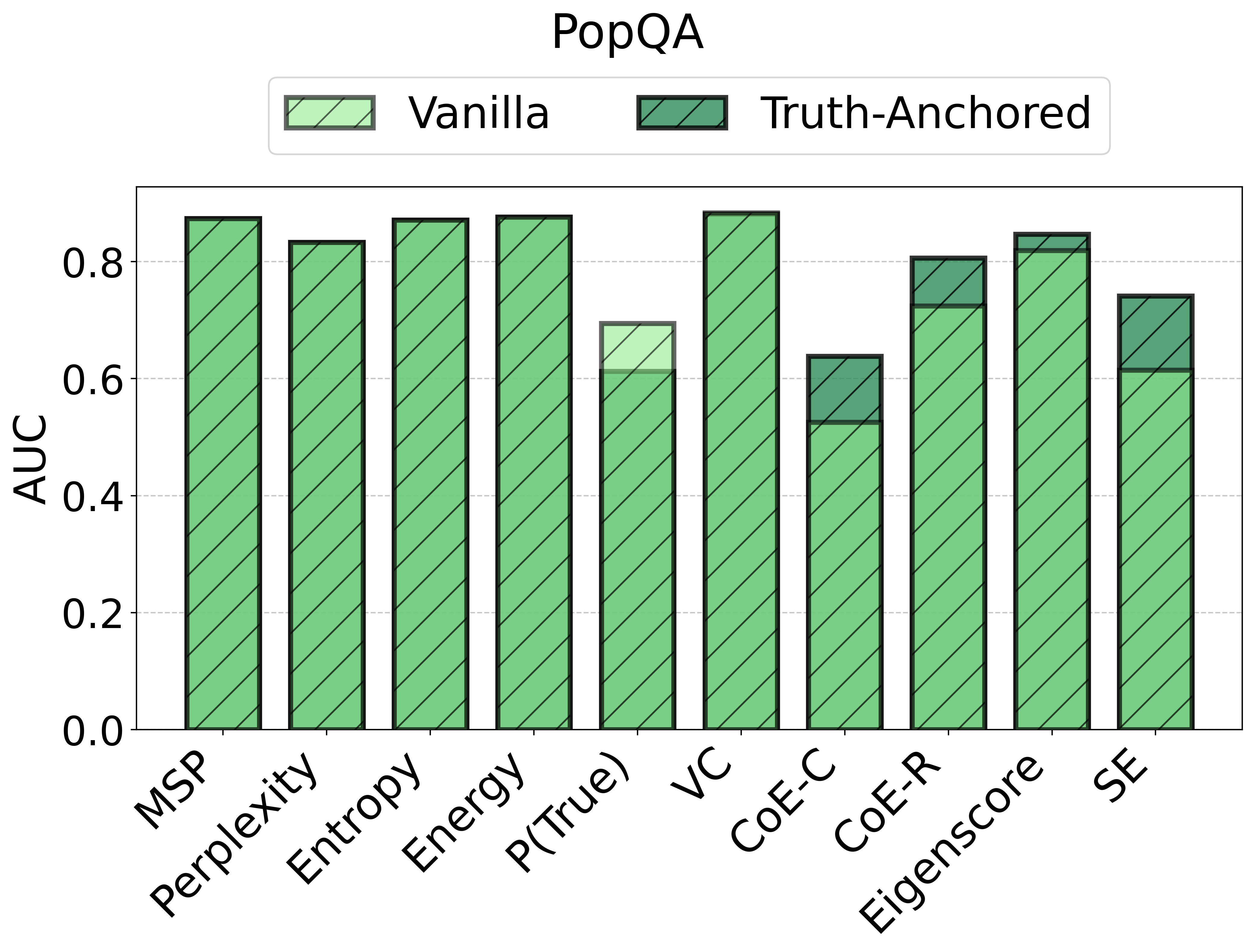}
    \end{subfigure}
    
    \caption{Our proposed TAC significantly reduces ECE ($\downarrow$) and frequently increases AUC ($\uparrow$), improving calibration and discrimination.}
    \label{fig:ece_comparison}
\end{figure*}

\section{Proxy Failure of Uncertainty Scores}
\label{sec:study}

We now formulate the proxy failure phenomenon studied in this work. Our goal is to answer the following question: \emph{When does a proxy uncertainty score fail to meaningfully discriminate between correct and incorrect generations?}
With \Cref{prop:miAUC}, we establish that this arises whenever the uncertainty score carries little information about correctness.

\paragraph{Notation.}
Given a query $X=x$ drawn from $\mathcal{X}$, a language model induces a conditional distribution $P(\cdot \mid x)$ over responses $Y \in \mathcal{Y}$. We assume a fixed truth function $g^\star:\mathcal{X}\times \mathcal{Y} \to \{0,1\}$, and define correctness label $C=g^\star(X,Y)$, with $C=1$ indicating correct response. We write $p_c=P(C=1)$ for the correctness prior.
Here, we consider the confidence score $S(X,Y)$, which we simply take as the inverse of uncertainty.
For any real-valued score $S$, define the population Area Under the receiver-operating Characteristic curve (AUC) \citep{davis2006relationship} as
\begin{equation}
    \mathrm{AUC}(S) = P(S_+>S_-)
\end{equation}
where $S_+ \sim P_+ = P(S\mid C=1)$ and $S_- \sim P_-= P(S \mid C=0)$. AUC is our primary measure of discriminability, where $AUC(S)=1/2$ corresponds to chance ranking.

\subsection{Mutual Information Controls Discriminability.}

\Cref{prop:miAUC} formalises the intuition that \emph{if an uncertainty score carries little information about correctness, then its discriminative ability must be close to chance.}

\begin{restatable}{proposition}{miAUC}
\label{prop:miAUC}
Let $S$ be any real-valued score and let $p_c = P (C=1) \in (0,1)$. Then, 
\begin{equation}
    \left| \mathrm{AUC(S)} - \frac{1}{2} \right| \; \leq \; \sqrt{\frac{I(C,S)}{2p_c(1-p_c)}},
\end{equation}
where $I$ denotes mutual information.
In particular, if $I(C,S)$ is small, then the discriminability of $S$ must be close to chance.

\end{restatable}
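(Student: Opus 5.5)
The argument chains three standard bounds: AUC deviation is controlled by the total variation distance between $P_+$ and $P_-$; that distance is controlled by Pinsker's inequality; and the resulting KL divergences average exactly to $I(C,S)$. The factor $p_c(1-p_c)$ enters naturally when we compare the two class-conditionals through the marginal $P_S = p_c P_+ + (1-p_c)P_-$.

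\textbf{Step 1 (AUC via total variation).} Let $S_+ \sim P_+$ and $S_-' \sim P_-$ be independent, and let $S_+'$ be an independent copy drawn from $P_+$. By symmetry, ties aside, $P(S_+ > S_+') = \tfrac12$. Ties can be handled with the convention $\mathrm{AUC}=P(S_+>S_-)+\tfrac12 P(S_+=S_-)$, or by assuming continuity. Then
\[
\mathrm{AUC}(S)-\tfrac12 = \mathbb{E}_{S_+}\!\left[ F_-(S_+) - F_+(S_+) \right],
\]
where $F_\pm(t)=P_\pm(S < t)$, adjusted for ties. For every $t$ we have $|F_-(t)-F_+(t)|\le \mathrm{TV}(P_+,P_-)$. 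Hence
\[
\left|\mathrm{AUC}(S)-\tfrac12\right| \le \mathrm{TV}(P_+,P_-).
\]

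\textbf{Step 2 (triangle inequality through the marginal).} We have $P_+ - P_S = (1-p_c)(P_+-P_-)$ and $P_- - P_S = -p_c(P_+-P_-)$. Therefore
\[
\mathrm{TV}(P_+,P_-)=\frac{\mathrm{TV}(P_+,P_S)}{1-p_c}=\frac{\mathrm{TV}(P_-,P_S)}{p_c}.
\]
Write $\delta = \mathrm{TV}(P_+,P_-)$. Pinsker's inequality gives $\mathrm{KL}(P_+\|P_S)\ge 2(1-p_c)^2\delta^2$ and $\mathrm{KL}(P_-\|P_S)\ge 2p_c^2\delta^2$. Since
\[
I(C,S)=p_c\,\mathrm{KL}(P_+\|P_S)+(1-p_c)\,\mathrm{KL}(P_-\|P_S),
\]
these combine to
\[
I(C,S)\ge 2\delta^2\big[p_c(1-p_c)^2+(1-p_c)p_c^2\big]=2p_c(1-p_c)\,\delta^2.
\]

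\textbf{Step 3.} Rearranging gives $\delta\le\sqrt{I(C,S)/(2p_c(1-p_c))}$, and Step 1 finishes the proof.

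The main obstacle is Step 1. It requires writing the AUC deviation as an expectation of a CDF difference and bounding that difference uniformly by total variation, while treating ties carefully. The rest is bookkeeping. The identity in Step 2 is what makes the constant come out exactly, so I would check it first.
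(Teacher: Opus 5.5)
Your proposal is correct and follows the paper's proof step for step: the paper's lemma likewise bounds $|\mathrm{AUC}(S)-\tfrac12|$ by $\delta(P_+,P_-)$ through $\int (F_- - F_+)\,p_+$, and the paper then uses the exact scalings $\delta(P_+,M)=(1-p_c)\delta$ and $\delta(P_-,M)=p_c\delta$, applies Pinsker to each term of $I(C,S)=p_c\mathrm{KL}(P_+\|M)+(1-p_c)\mathrm{KL}(P_-\|M)$, and rearranges. Your tie convention $\mathrm{AUC}=P(S_+>S_-)+\tfrac12 P(S_+=S_-)$ is an improvement over the paper: the paper simply assumes densities, and under the strict definition $P(S_+>S_-)$ the stated bound fails for scores with atoms (a constant $S$ has $I(C,S)=0$ but $\mathrm{AUC}=0$).
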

We defer all full proofs to \Cref{sec:proofs}. Next, we show how this can manifest in practice, leveraging entropy as a concrete example.

\subsection{Non-discriminability of Entropy}

Predictive entropy illustrates proxy failure clearly.
A model may commit early to an incorrect concept, yet continue to answer fluently with low local uncertainty. A correct continuation can exhibit an almost identical entropy pattern. Entropy, thus, is sensitive to local continuation structure rather than semantic truth, a global characteristic of the entire sequence.
The consequence is that entropy carries vanishing information about correctness and can therefore, by \Cref{prop:miAUC}, fail to discriminate.
\Cref{prop:peFail} formalises this occurrence.

\begin{restatable}{proposition}{peFail}
\label{prop:peFail}

Fix a truth function $g^\star$ and any predictive entropy proxy $S_{pe}$.
For any $\varepsilon>0$, there exists a conditional distribution $P(\cdot \mid x)$ such that
\begin{equation}
    I(C, S_{pe}) \le \varepsilon
\end{equation}

Consequently, by \Cref{prop:miAUC},
\begin{equation}
    \left| \mathrm{AUC}(S_{pe}) - \frac{1}{2} \right| \; \leq \; \sqrt{\frac{\varepsilon}{2p_c(1-p_c)}}
\end{equation}

In particular, the AUC of predictive entropy can be made arbitrarily close to chance.
    
\end{restatable}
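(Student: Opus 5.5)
The plan is a direct construction that makes the entropy proxy statistically independent of (or nearly independent of) correctness, followed by an application of \Cref{prop:miAUC}. The predictive entropy proxy $S_{pe}(X,Y)$ depends only on the token-level conditional distributions encountered along the decoding path of $Y$, that is, on quantities such as $H(P(\cdot\mid x, y_{<t}))$ aggregated over $t$. It does not depend on the labels $g^\star(x,y)$. This gives us freedom to design $P(\cdot\mid x)$ so that the distribution of $S_{pe}$ is the same whether the sampled response is correct or incorrect.

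\textbf{Step 1: choose the support.} Fix a query $x$ (or, more generally, do the following for each $x$ in the support of $X$). Since $p_c\in(0,1)$ must hold, assume that for $x$ there exist responses with $g^\star(x,\cdot)=1$ and responses with $g^\star(x,\cdot)=0$. Pick one correct response $y^+$ and one incorrect response $y^-$.

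\textbf{Step 2: build a symmetric autoregressive distribution.} Put mass $q$ on $y^+$ and $1-q$ on $y^-$, with a shared branching structure arranged so that each token-level step has the same entropy profile along both paths. Concretely, let the paths share a prefix, and let them branch at a single token position where the two options are exchangeable. Pad both with deterministic continuations of equal length; if needed, use padding tokens so that the per-step distributions are identical up to relabelling. Because entropy is invariant under permutation of outcomes, we get $S_{pe}(x,y^+)=S_{pe}(x,y^-)$. Then $S_{pe}$ is constant on the support given $x$, and across $x$ it is a function of $x$ alone. Making the construction identical for every $x$ makes $S_{pe}$ a.s. constant, which gives $I(C,S_{pe})=0\le\varepsilon$ for any $\varepsilon$. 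If the particular proxy also aggregates length-normalised or sampled quantities, I would instead take a sequence of distributions $P_n$ in which the two branches' statistics converge, so that $P(S\mid C=1)$ and $P(S\mid C=0)$ converge in total variation. Continuity of mutual information for a binary $C$ and bounded $p_c$ then gives $I\to 0$, which is enough for the $\varepsilon$ statement.

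\textbf{Step 3: conclude.} Apply \Cref{prop:miAUC} with the resulting $I(C,S_{pe})\le\varepsilon$ to obtain the displayed AUC bound, noting that $p_c=q$ (averaged over $x$) lies in $(0,1)$ by construction.

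The main obstacle is Step 2 under the generality of ``any predictive entropy proxy'': the correct and incorrect responses have different token strings, so I must make sure the proxy's value depends only on the entropy profile, which is permutation-invariant, and not on token identities. I would handle this by fixing the definition of a predictive entropy proxy as a function of the sequence of conditional entropies (possibly sample-averaged), and by using the relabelling symmetry at the branch point. A secondary subtlety is non-degeneracy: the statement needs $g^\star(x,\cdot)$ to be non-constant for some $x$, and I would state this explicitly as an implicit assumption.
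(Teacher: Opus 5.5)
Your construction is the paper's Step~1. For each $x$ you take one correct and one incorrect response sharing a prefix, with a single branch point and deterministic continuations, so both paths see the same entropy profile and $I(C,S_{pe})=0$. Using a general weight $q$ instead of $\tfrac12$ is harmless provided $q$ does not depend on $x$. You also do not need $S_{pe}$ to be a.s.\ constant: with constant $q$, $C\perp X$ while $S_{pe}$ is a function of $X$.

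Where you differ is that you stop at $P=P_0$, which is all the statement requires. The paper instead mixes, $P=(1-\lambda)P_0+\lambda P_1$ with $P_1$ arbitrary, and bounds $I(C;S_{pe})\le h(\lambda)+\lambda\log 2$ by conditioning on the mixture indicator $B$. That step buys little, since for small $\lambda$ the model is still essentially $P_0$. As written it is also not valid. The predictive entropy of $P$ comes from the mixture's next-token conditionals, not from $P_0$'s, so $I(C;S_{pe}\mid B=0)=0$ does not follow. For example, take summed token entropy and let $P_1$ split its mass between $y_+$ and a third response $y_3$ (of either label) that agrees with $y_+$ through position $\tau$ and branches off later. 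Then every response through the $+$ branch gains a positive-entropy step that $y_-$ lacks, $S_{pe}$ separates $y_+$ from $y_-$, and $I(C;S_{pe})\to\log 2$ as $\lambda\to 0$. Your direct route avoids this.

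Two points need tightening; the paper's proof leaves both implicit as well.

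First, identical profiles give identical scores only for proxies that ignore trailing zero-entropy steps, such as summed or maximal token entropy. For length-normalised entropy you need $|y_+(x)|=|y_-(x)|$. Padding does not fix this, because padded strings are new elements of $\mathcal{Y}$ whose labels are dictated by the fixed $g^\star$. Your limiting fallback also fails as stated: when $S_{pe}(x,y_+)\neq S_{pe}(x,y_-)$, the two class-conditional laws can be mutually singular, so making the values close does not make their total variation small. It is better to state the equal-length (or trailing-zero invariance) assumption explicitly.

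Second, because your $S_{pe}$ is a.s.\ constant, the literal definition $\mathrm{AUC}=P(S_+>S_-)$ gives $0$, not $\tfrac12$. \Cref{prop:miAUC} rests on a lemma proved only for densities. Its bound covers your construction only with the tie-corrected $\mathrm{AUC}=P(S_+>S_-)+\tfrac12 P(S_+=S_-)$, under which $P_+=P_-$ gives exactly $\tfrac12$. Say which convention you use.
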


\begin{proof}[Proof Idea]
\renewcommand{\qedsymbol}{}
We provide a proof by construction. For each query, we consider a setting in which the model selects between one correct and one incorrect response that diverge only at a key semantic commitment, while the remaining tokens are routine continuations. This makes the entropy profiles indistinguishable despite different correctness labels.
We construct our required distribution $P$ to be a mixture of this setting and an arbitrary LLM.
Under $P$, we show that mutual information between predictive entropy and correctness vanishes to zero.
    
\end{proof}

\section{Truth Anchoring as a Remedy}

We have demonstrated that proxy metrics can fail as they may only reflect model behaviour but carry little information about global semantic correctness.
This suggests that reliable uncertainty estimation cannot rely on proxies alone.
Instead, we argue that uncertainty estimation should explicitly target, and therefore, be \emph{anchored} to, a notion of truth or correctness.

Given an information variable $\mathcal{I}$ available at inference time, the natural target is the posterior probability of correctness $p^\star(\mathcal{I}) = P(C=1 \mid \mathcal{I})$.
Under a strictly proper scoring rule $\ell$, the \emph{truth-anchored} posterior $p^\star$ is exactly the Bayes optimal predictor. That is, among all predictors $q(\mathcal{I})\in[0,1]$, it uniquely minimises the population risk $\mathcal{R}(q)=\mathbb{E}\left[\ell(q(\mathcal{I}),C)\right]$ \citep{gneiting2007strictly}.
Thus, rather than treating a raw proxy score itself as uncertainty, we aim to learn this posterior probability that is optimally aligned with truth under a chosen scoring rule.
We call the learning of this mapping from raw scalar to truth-aligned score \textit{\textbf{Truth AnChoring (TAC)}}.
This is consistent with converging evidence that probe-based methods showcase superior performance.

\begin{table*}[!t]
\centering
\renewcommand{\arraystretch}{1.25}
\setlength{\tabcolsep}{2pt}
\resizebox{\textwidth}{!}{
\begin{tabular}{l ccc|ccc|ccc|ccc|ccc|ccc}
\toprule
& \multicolumn{6}{c}{\textbf{TriviaQA}} & \multicolumn{6}{c}{\textbf{SciQ}} & \multicolumn{6}{c}{\textbf{PopQA}} \\
\cmidrule (lr){2-7}\cmidrule (lr){8-13} \cmidrule (lr){14-19}
& \multicolumn{3}{c}{\textbf{ECE($\downarrow$)}} & \multicolumn{3}{c}{\textbf{AUROC($\uparrow$)}} & \multicolumn{3}{c}{\textbf{ECE($\downarrow$)}} & \multicolumn{3}{c}{\textbf{AUROC($\uparrow$)}} & \multicolumn{3}{c}{\textbf{ECE($\downarrow$)}} & \multicolumn{3}{c}{\textbf{AUROC($\uparrow$)}}\\
\cmidrule (lr){2-4}\cmidrule (lr){5-7}\cmidrule (lr){8-10}\cmidrule (lr){11-13} \cmidrule (lr){14-16}\cmidrule (lr){17-19}
\textbf{Method} & \textbf{Vanilla} & \textbf{TAC} & \textbf{$\Delta$}& \textbf{Vanilla} & \textbf{TAC} & \textbf{$\Delta$} & \textbf{Vanilla} & \textbf{TAC} & \textbf{$\Delta$}& \textbf{Vanilla} & \textbf{TAC} & \textbf{$\Delta$} & \textbf{Vanilla} & \textbf{TAC} & \textbf{$\Delta$}& \textbf{Vanilla} & \textbf{TAC} & \textbf{$\Delta$} \\

\midrule
\headercolor
\multicolumn{19}{c}{\textbf{Qwen-3-4B}} \\
MSP & 63.68 & 5.52 & \textcolor{darkgreen}{-58.16} & 73.82 & 73.58 & \textcolor{darkred}{-0.24} & 55.97 & 3.89 & \textcolor{darkgreen}{-52.08} & 70.07 & 70.02 & \textcolor{darkred}{-0.05} & 80.21 & 6.07 & \textcolor{darkgreen}{-74.13} & 33.02 & 69.14 & \textcolor{darkgreen}{36.12} \\
Perplexity & 62.71 & 6.12 & \textcolor{darkgreen}{-56.59} & 73.60 & 73.48 & \textcolor{darkred}{-0.12} & 51.53 & 9.95 & \textcolor{darkgreen}{-41.58} & 69.98 & 69.98 & 0.00 & 80.03 & 4.65 & \textcolor{darkgreen}{-75.38} & 30.43 & 61.09 & \textcolor{darkgreen}{30.66} \\
Entropy & 53.45 & 5.97 & \textcolor{darkgreen}{-47.47} & 73.51 & 73.87 & \textcolor{darkgreen}{0.37} & 42.20 & 2.05 & \textcolor{darkgreen}{-40.15} & 69.04 & 67.19 & \textcolor{darkred}{-1.85} & 72.57 & 3.70 & \textcolor{darkgreen}{-68.87} & 33.93 & 63.13 & \textcolor{darkgreen}{29.20} \\
Energy & 16.97 & 4.05 & \textcolor{darkgreen}{-12.91} & 45.89 & 49.65 & \textcolor{darkgreen}{3.75} & 11.61 & 3.31 & \textcolor{darkgreen}{-8.30} & 50.25 & 54.63 & \textcolor{darkgreen}{4.39} & 41.70 & 4.20 & \textcolor{darkgreen}{-37.51} & 55.28 & 59.13 & \textcolor{darkgreen}{3.85} \\
P(True) & 41.20 & 5.92 & \textcolor{darkgreen}{-35.28} & 63.13 & 63.33 & \textcolor{darkgreen}{0.20} & 41.85 & 1.83 & \textcolor{darkgreen}{-40.03} & 51.40 & 55.46 & \textcolor{darkgreen}{4.06} & 56.48 & 2.91 & \textcolor{darkgreen}{-53.57} & 83.91 & 83.33 & \textcolor{darkred}{-0.57} \\
VC & 35.48 & 14.58 & \textcolor{darkgreen}{-20.90} & 83.27 & 82.79 & \textcolor{darkred}{-0.48} & 57.45 & 1.34 & \textcolor{darkgreen}{-56.11} & 63.15 & 63.71 & \textcolor{darkgreen}{0.56} & 20.64 & 27.54 & \textcolor{darkred}{6.90} & 82.18 & 81.87 & \textcolor{darkred}{-0.31} \\
CoE-C & 8.24 & 8.60 & \textcolor{darkred}{0.36} & 70.14 & 70.39 & \textcolor{darkgreen}{0.25} & 3.35 & 7.64 & \textcolor{darkred}{4.29} & 53.71 & 54.83 & \textcolor{darkgreen}{1.12} & 20.65 & 3.95 & \textcolor{darkgreen}{-16.70} & 74.62 & 76.55 & \textcolor{darkgreen}{1.92} \\
CoE-R & 19.76 & 6.69 & \textcolor{darkgreen}{-13.07} & 52.95 & 65.96 & \textcolor{darkgreen}{13.01} & 29.21 & 5.97 & \textcolor{darkgreen}{-23.25} & 37.19 & 62.35 & \textcolor{darkgreen}{25.16} & 4.89 & 3.40 & \textcolor{darkgreen}{-1.48} & 73.81 & 81.41 & \textcolor{darkgreen}{7.60} \\
Eigenscore & 43.99 & 2.40 & \textcolor{darkgreen}{-41.59} & 74.18 & 74.17 & \textcolor{darkred}{-0.01} & 42.04 & 6.77 & \textcolor{darkgreen}{-35.27} & 68.53 & 68.58 & \textcolor{darkgreen}{0.05} & 66.49 & 2.86 & \textcolor{darkgreen}{-63.63} & 35.30 & 57.91 & \textcolor{darkgreen}{22.61} \\
SE & 66.82 & 5.27 & \textcolor{darkgreen}{-61.55} & 49.77 & 65.13 & \textcolor{darkgreen}{15.37} & 50.65 & 5.00 & \textcolor{darkgreen}{-45.65} & 44.31 & 69.38 & \textcolor{darkgreen}{25.07} & 80.22 & 4.20 & \textcolor{darkgreen}{-76.03} & 36.77 & 42.18 & \textcolor{darkgreen}{5.40} \\

\midrule
\headercolor
\multicolumn{19}{c}{\textbf{Ministral-3-8B}} \\

MSP & 28.86 & 12.85 & \textcolor{darkgreen}{-16.02} & 83.32 & 83.32 & 0.00 & 42.17 & 10.10 & \textcolor{darkgreen}{-32.07} & 73.76 & 73.76 & 0.00 & 47.57 & 31.05 & \textcolor{darkgreen}{-16.52} & 87.37 & 87.37 & 0.00 \\
Perplexity & 39.57 & 15.84 & \textcolor{darkgreen}{-23.73} & 81.88 & 81.88 & 0.00 & 46.23 & 8.44 & \textcolor{darkgreen}{-37.79} & 74.05 & 74.05 & 0.00 & 58.70 & 23.57 & \textcolor{darkgreen}{-35.13} & 83.32 & 83.32 & 0.00 \\
Entropy & 25.31 & 14.32 & \textcolor{darkgreen}{-10.99} & 83.31 & 83.31 & 0.00 & 28.09 & 12.75 & \textcolor{darkgreen}{-15.34} & 77.08 & 77.08 & 0.00 & 34.80 & 26.41 & \textcolor{darkgreen}{-8.39} & 87.10 & 87.10 & 0.00 \\
Energy & 7.47 & 13.21 & \textcolor{darkred}{5.74} & 80.45 & 80.45 & 0.00 & 17.22 & 6.20 & \textcolor{darkgreen}{-11.02} & 76.23 & 75.59 & \textcolor{darkred}{-0.64} & 18.78 & 28.35 & \textcolor{darkred}{9.57} & 87.61 & 87.61 & 0.00 \\
P(True) & 5.33 & 8.59 & \textcolor{darkred}{3.27} & 63.21 & 63.58 & \textcolor{darkgreen}{0.37} & 17.06 & 2.90 & \textcolor{darkgreen}{-14.16} & 49.73 & 51.38 & \textcolor{darkgreen}{1.65} & 31.52 & 4.15 & \textcolor{darkgreen}{-27.37} & 69.46 & 61.27 & \textcolor{darkred}{-8.19} \\
VC & 39.54 & 2.17 & \textcolor{darkgreen}{-37.37} & 64.84 & 64.84 & 0.00 & 58.13 & 0.32 & \textcolor{darkgreen}{-57.81} & 69.51 & 69.51 & 0.00 & 29.83 & 2.00 & \textcolor{darkgreen}{-27.83} & 88.35 & 88.17 & \textcolor{darkred}{-0.18} \\
CoE-C & 43.86 & 0.78 & \textcolor{darkgreen}{-43.08} & 60.26 & 58.85 & \textcolor{darkred}{-1.41} & 33.94 & 3.32 & \textcolor{darkgreen}{-30.62} & 62.13 & 63.06 & \textcolor{darkgreen}{0.93} & 12.84 & 6.49 & \textcolor{darkgreen}{-6.34} & 52.53 & 63.85 & \textcolor{darkgreen}{11.32} \\
CoE-R & 19.17 & 6.00 & \textcolor{darkgreen}{-13.17} & 48.34 & 61.70 & \textcolor{darkgreen}{13.35} & 23.68 & 5.85 & \textcolor{darkgreen}{-17.84} & 34.92 & 65.08 & \textcolor{darkgreen}{30.16} & 25.98 & 4.86 & \textcolor{darkgreen}{-21.12} & 72.44 & 80.63 & \textcolor{darkgreen}{8.19} \\
Eigenscore & 14.11 & 12.36 & \textcolor{darkgreen}{-1.75} & 80.90 & 80.90 & 0.00 & 7.39 & 4.27 & \textcolor{darkgreen}{-3.12} & 81.08 & 82.29 & \textcolor{darkgreen}{1.21} & 18.35 & 5.00 & \textcolor{darkgreen}{-13.35} & 81.92 & 84.70 & \textcolor{darkgreen}{2.79} \\
SE & 47.58 & 9.89 & \textcolor{darkgreen}{-37.69} & 55.39 & 74.68 & \textcolor{darkgreen}{19.30} & 57.45 & 10.34 & \textcolor{darkgreen}{-47.11} & 45.57 & 67.02 & \textcolor{darkgreen}{21.45} & 74.90 & 5.93 & \textcolor{darkgreen}{-68.97} & 61.48 & 74.15 & \textcolor{darkgreen}{12.67} \\

\midrule
\headercolor
\multicolumn{19}{c}{\textbf{Gemma-2-9B}} \\

MSP & 31.96 & 10.67 & \textcolor{darkgreen}{-21.29} & 71.46 & 71.46 & 0.00 & 38.69 & 7.99 & \textcolor{darkgreen}{-30.70} & 72.95 & 72.91 & \textcolor{darkred}{-0.04} & 65.10 & 2.64 & \textcolor{darkgreen}{-62.46} & 67.70 & 67.31 & \textcolor{darkred}{-0.39} \\
Perplexity & 29.55 & 6.22 & \textcolor{darkgreen}{-23.33} & 70.71 & 70.14 & \textcolor{darkred}{-0.57} & 29.03 & 3.37 & \textcolor{darkgreen}{-25.65} & 70.40 & 69.45 & \textcolor{darkred}{-0.95} & 60.10 & 6.65 & \textcolor{darkgreen}{-53.44} & 67.97 & 67.97 & 0.00 \\
Entropy & 22.97 & 7.52 & \textcolor{darkgreen}{-15.45} & 72.84 & 72.91 & \textcolor{darkgreen}{0.07} & 21.02 & 9.47 & \textcolor{darkgreen}{-11.55} & 73.99 & 73.77 & \textcolor{darkred}{-0.22} & 46.75 & 17.07 & \textcolor{darkgreen}{-29.67} & 79.14 & 79.14 & 0.00 \\
Energy & 13.43 & 4.38 & \textcolor{darkgreen}{-9.04} & 67.11 & 67.28 & \textcolor{darkgreen}{0.17} & 24.46 & 7.23 & \textcolor{darkgreen}{-17.23} & 54.84 & 51.42 & \textcolor{darkred}{-3.42} & 64.72 & 2.94 & \textcolor{darkgreen}{-61.77} & 39.41 & 63.13 & \textcolor{darkgreen}{23.71} \\
P(True) & 31.91 & 2.16 & \textcolor{darkgreen}{-29.75} & 60.19 & 60.05 & \textcolor{darkred}{-0.13} & 38.10 & 2.61 & \textcolor{darkgreen}{-35.49} & 48.23 & 48.08 & \textcolor{darkred}{-0.15} & 66.03 & 18.19 & \textcolor{darkgreen}{-47.84} & 81.88 & 81.88 & 0.00 \\
VC & 34.10 & 3.31 & \textcolor{darkgreen}{-30.79} & 58.60 & 58.60 & 0.00 & 44.26 & 2.02 & \textcolor{darkgreen}{-42.24} & 63.56 & 63.33 & \textcolor{darkred}{-0.23} & 63.41 & 3.98 & \textcolor{darkgreen}{-59.43} & 47.78 & 68.99 & \textcolor{darkgreen}{21.20} \\
CoE-C & 25.20 & 6.39 & \textcolor{darkgreen}{-18.82} & 48.72 & 63.24 & \textcolor{darkgreen}{14.52} & 18.16 & 1.22 & \textcolor{darkgreen}{-16.94} & 49.03 & 53.22 & \textcolor{darkgreen}{4.19} & 49.59 & 1.91 & \textcolor{darkgreen}{-47.68} & 17.52 & 82.21 & \textcolor{darkgreen}{64.69} \\
CoE-R & 9.77 & 4.49 & \textcolor{darkgreen}{-5.28} & 49.71 & 60.61 & \textcolor{darkgreen}{10.90} & 7.28 & 2.43 & \textcolor{darkgreen}{-4.85} & 49.18 & 53.22 & \textcolor{darkgreen}{4.04} & 44.37 & 5.95 & \textcolor{darkgreen}{-38.43} & 18.54 & 81.46 & \textcolor{darkgreen}{62.92} \\
Eigenscore & 12.88 & 5.17 & \textcolor{darkgreen}{-7.71} & 70.64 & 70.37 & \textcolor{darkred}{-0.27} & 14.32 & 5.85 & \textcolor{darkgreen}{-8.47} & 70.41 & 70.60 & \textcolor{darkgreen}{0.19} & 38.94 & 1.91 & \textcolor{darkgreen}{-37.03} & 59.24 & 61.71 & \textcolor{darkgreen}{2.47} \\
Semantic Entropy & 35.34 & 5.71 & \textcolor{darkgreen}{-29.63} & 41.17 & 61.24 & \textcolor{darkgreen}{20.07} & 39.09 & 3.54 & \textcolor{darkgreen}{-35.56} & 48.73 & 60.41 & \textcolor{darkgreen}{11.68} & 69.70 & 4.14 & \textcolor{darkgreen}{-65.57} & 53.41 & 60.96 & \textcolor{darkgreen}{7.56} \\

\bottomrule
\end{tabular}
}

\caption{Main results. $\Delta$ means the change of ECE and AUC after truth anchoring (TA), with improvements in \textcolor{darkgreen}{green}.}

\label{tab:main_result}
\end{table*}

\subsection{Proxy Score Truth Anchoring}

A particularly simple instantiation arises when the available information is just the scalar uncertainty score $S$, such as predictive entropy or response consistency. In this case, the Bayes-optimal target reduces to $p^\star(S)=P(C=1|S)$, namely, the posterior correctness conditioned on the score itself.
This suggests a minimal post-hoc calibration strategy.
Rather than using $S$ directly as uncertainty with respect to the truth, we learn a mapping

\begin{equation}
    m:\mathbb{R} \to [0,1]
\end{equation}
from the raw score to its truth-aligned estimate.

\subsection{Implementation}

\paragraph{Mapper architecture.} We instantiate $m_\theta$ as a lightweight multilayer perceptron (MLP) operating on the one-dimensional input $S$.
Concretely, the mapper consists of an input layer, three hidden layers with ReLU activations, and a scalar output head followed by a sigmoid function.
From a raw score $S_i$, the mapper produces $\hat{p}_i= m_\theta(S_i) \in (0,1)$.
In practice, a small network is sufficiently capable as the input is one-dimensional. In our experiments, we use hidden dimension of $32$ with three hidden layers, and optimise with Adam using learning rate of $0.01$, with early stopping on validation AUROC.

\paragraph{Objective.}
Given score--label pairs $\{S_i, C_i\}_{i=1}^n$, we optimise the binary cross-entropy (BCE) loss,
\begin{equation}
\begin{split}
    \mathcal{L}_{\mathrm{BCE}} = & \frac{1}{n} \sum_{i=1}^n \ell\big(m_\theta(S_i),\, C_i\big) \\
    = & -\frac{1}{n}\sum_{i=1}^n\Big[ C_i \log m_\theta(S_i) \\
    & + \; (1-C_i)\big(1-m_\theta(S_i)\big)\Big]
\end{split}
\end{equation}
$\mathcal{L}_{\mathrm{BCE}}$ is a strictly proper scoring loss, so its population optimum is the correctness posterior $p^\star(S)$.

Since a primary evaluation metric is AUC, we also consider augmenting the objective with a pairwise ranking loss.
Pairwise ranking losses are widely used to improve ordering quality and are closely connected to AUC-oriented learning \citep{zhe2007learning}.
Specifically, for positive--negative pairs $(i,j)$ with $C_i=1$ and $C_j=0$, we use a logistic pairwise loss

\begin{multline} 
\mathcal{L}_{\mathrm{rank}}(\theta) = \frac{1}{|\mathcal{P}|} \sum_{(i,j)\in\mathcal{P}}\log\!\Big(1\; + \\
\exp\big(-(z_i-z_j)\big)\Big),
\end{multline}
where $z_i$ and $z_j$ are pre-sigmoid logits of the mapper and $\mathcal{P}=\{(i,j): C_i=1,\, C_j=0\}$ is the set of positive--negative pairs.
Intuitively, this term encourages correct responses to receive higher scores than incorrect ones.

Our final objective is

\begin{equation}
    \mathcal{L}(\theta) = \mathcal{L}_{\mathrm{BCE}}(\theta) + \phi_{\mathrm{rank}} \, \mathcal{L}_{\mathrm{rank}}(\theta),
\end{equation}
where $\lambda_{\mathrm{rank}}$ controls the strength of the ranking regulariser. By default, we use $\phi_{\mathrm{rank}}=1$.
We perform ablation on $\phi_\mathrm{rank}$ in \Cref{fig:ablation,subsec:ablation}.

We note that the ranking term is auxiliary. Unlike BCE, it is not a strictly proper scoring rule, and does not by itself target the correctness posterior.
We include it to better align training with AUC. Collectively, the BCE term provides probabilistic grounding, while the pairwise term can help sharpen ranking when raw proxy is informative but poorly ordered.
\paragraph{Remark.}  We emphasise that this is intentionally minimal, with the mapper operating only on the raw uncertainty score, without introducing additional model features. As such, it isolates the effect of truth anchoring itself. If performance improves after calibration, this directly indicates that the original proxy was not intrinsically truth-aligned, but could be made more useful once connected to correctness.
Furthermore, TAC mainly helps with calibration, and does not overturn the non-discriminability result of \Cref{sec:study}. If a score carries no information regarding correctness, strong discrimination cannot be recovered by post-hoc mapping of that score alone, but may be achievable with more information (see \Cref{fig:inter} for pairwise inter-score anchoring).

\begin{figure}[!ht]
    \centering
    \begin{subfigure}{\linewidth}
        \includegraphics[width=\linewidth]{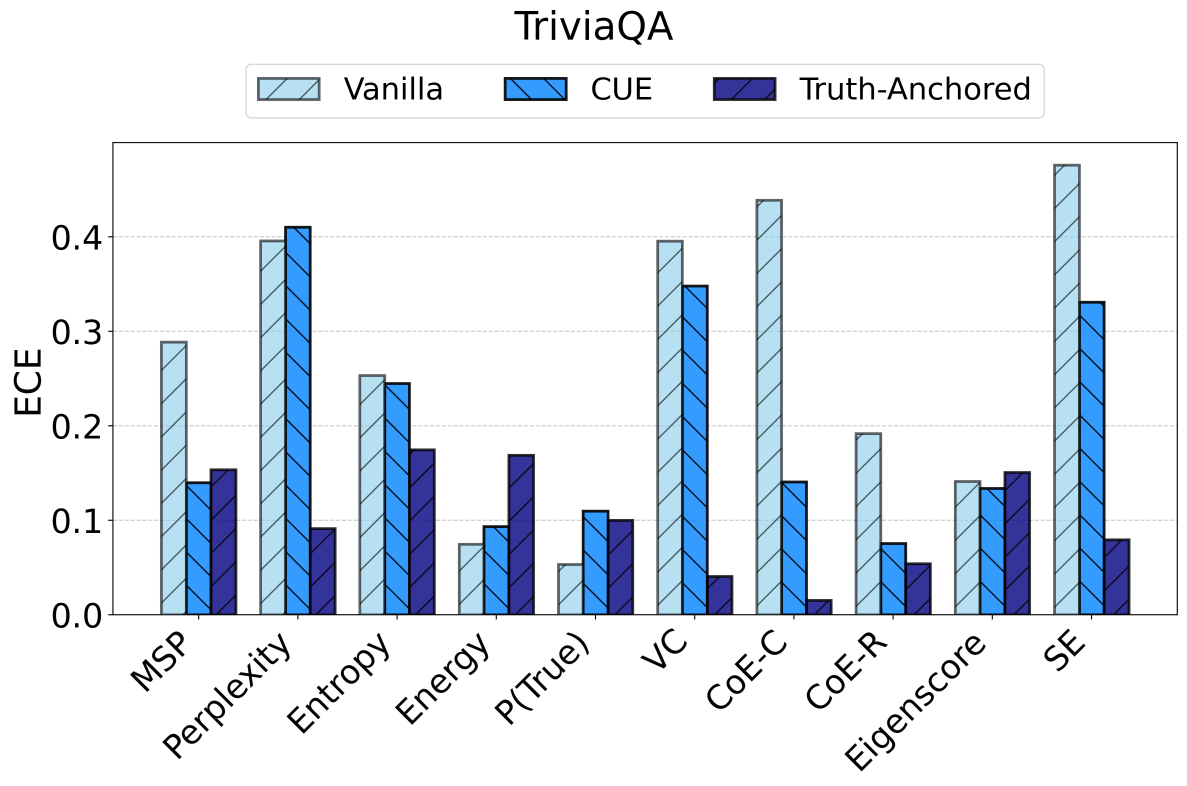}
    \end{subfigure}
    
    \begin{subfigure}{\linewidth}
        \includegraphics[width=\linewidth]{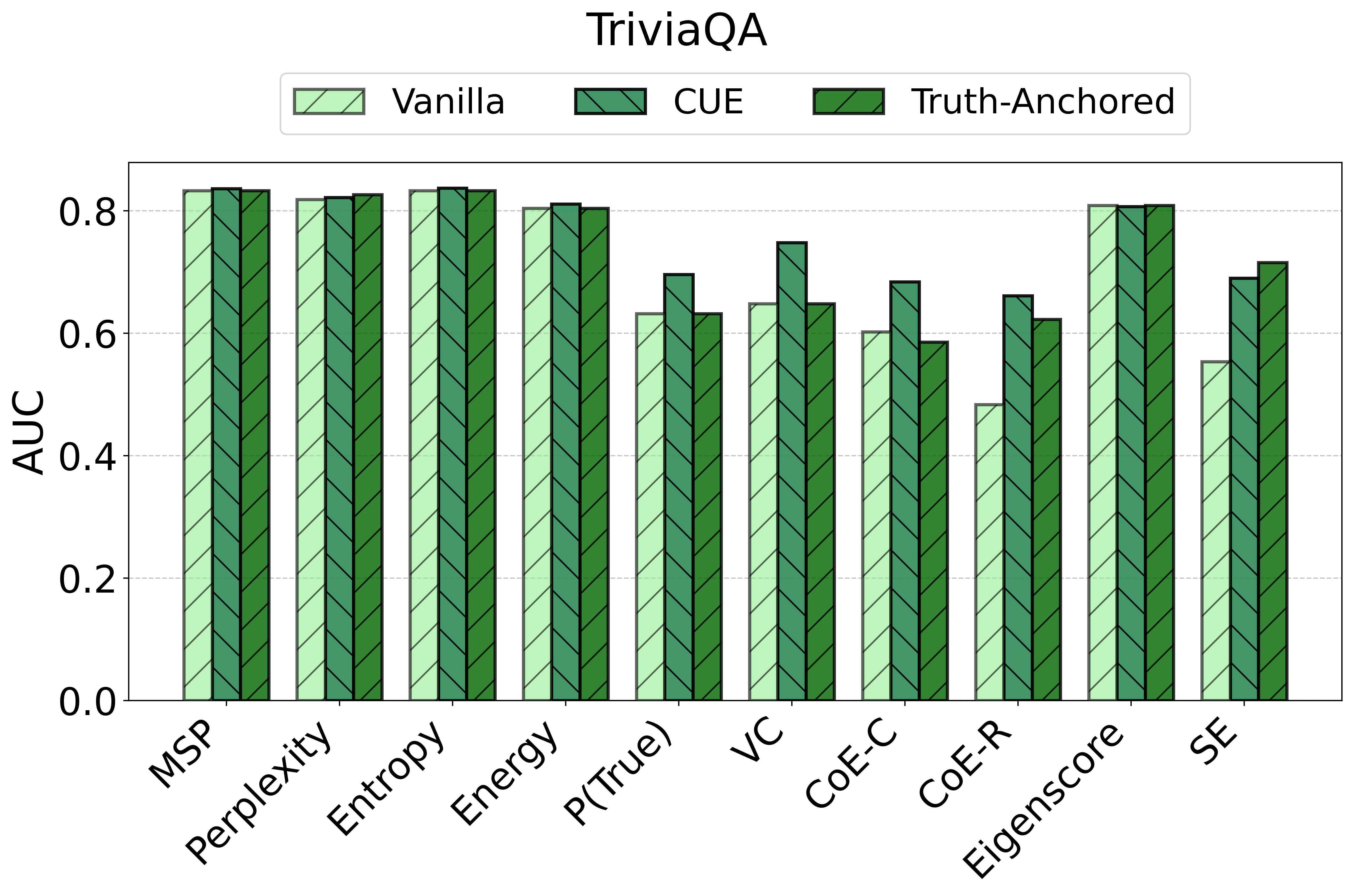}
    \end{subfigure}
    \caption{Performance of vanilla, CUE, and truth-anchored scores.}
    \label{fig:cue_trivia}
\end{figure}

\section{Experimental Setup}
\label{sec:expt}

\subsection{Language Models}
We select five models from four popular model families: \textbf{Qwen-3-4B} \citep{yang2025qwen3}, \textbf{Llama-3.2-3B}, \textbf{Llama-3.1-8B} \citep{grattafiori2024llama}, \textbf{Ministral-3-8B} \citep{liu2026ministral}, and \textbf{Gemma-2-9B} \citep{team2024gemma} for our main experiments. We report the performance of the Llama models in \Cref{sec:add_expt} for better readability.
For more detailed experiments and analysis, we focus on Ministral-3-8B. For all models, we work with their instruction-tuned versions.

\subsection{Datasets.}
We utilise three representative open-domain question-answering (QA) datasets, covering a wide spectrum of topics:
\begin{inparaenum}[(i)]
    \item \textbf{TriviaQA} \citep{joshi-etal-2017-triviaqa}: broad, general-knowledge factual recall;
    \item \textbf{SciQ} \citep{welbl-etal-2017-crowdsourcing}: domain-specific scientific queries; and 
    \item \textbf{PopQA} \citep{mallen-etal-2023-trust}: long-tail entity questions.
\end{inparaenum}
Collectively, these datasets constitute fact-heavy queries across many topics, making them suitable to test factual recall of language models.

\subsection{Metrics}
We evaluate our results using two standard UE metrics:
\begin{inparaenum}[(i)]
    \item \textbf{AUC} \citep{davis2006relationship}: with a value of 1 signifying perfect discriminative performance to rank samples reliably, while a value of 1 indicates that estimation is no better than chance; and
    \item \textbf{ECE} \citep[Expected Calibration Error;][]{guo2017calibration}, which assesses the calibration by calculating the expected difference between predicted confidence and empirical accuracy, penalizing over- and under-confidence.
    In practice, this is computed by partitioning all $n$ confidence scores into $M$ bins and comparing the average bin confidence ($\mathrm{acc}$) of each bin $B_m$ against the ratio of correct predictions ($\mathrm{acc}$).
\end{inparaenum}
\begin{equation}
    \mathrm{ECE} = \sum_{m=1}^{M} \frac{|B_M|}{n} |\mathrm{acc}(B_m) - \mathrm{conf}(B_m)|
\end{equation}
An $\mathrm{ECE}$ of 0 means perfect calibration, where the confidence scores exactly matches actual empirical accuracy.
We note a degenerate case where the ECE is zero but AUC is 0.5 (random chance). This happens when a model makes uninformative but perfectly calibrated predictions.
Taken together, these two metrics provide a more complete and fairer evaluation of confidence or uncertainty scores.

\subsection{Baselines}
We evaluate with 10 UE scores across four families of approaches:
\begin{inparaenum}[(i)]
    \item \textbf{Logit-based}: Maximum Sequence Probability (MSP),  Perplexity \citep{shih2023long}, Entropy \citep{huang2023look}, Energy \citep{liu2020energy};
    \item \textbf{Elicitation}: P(True) \citep{kadavath2022language}, Verbalised Confidence \citep[VC;][]{tian-etal-2023-just};
    \item \textbf{Internal States}: CoE-C, CoE-R \citep{wang2024latent}; and
    \item \textbf{Consistency}: Semantic Entropy \citep[SE;][]{kuhn2023semantic}.
\end{inparaenum}
Furthermore, we compare with the correction framework CUE \citep{li-etal-2025-towards}.
For CUE, we make use of DeBERTa V3~\citep{he2021deberta, he2021debertav3} as the encoder for the corrector model and train for three epochs.

\section{Results and Analysis}

\subsection{Main Results}
\Cref{tab:main_result} and \Cref{fig:ece_comparison} present the main results.
First, we observe the proxy failure phenomenon. For example, with Ministral-3-8B, CoE-R and SE fail to discriminate meaningfully on SciQ, although they perform significantly better on TriviaQA and PopQA, highlighting how proxy scores \emph{can}, but often \emph{fail to}, track factuality.
We observe a similar occurrence with Gemma-2-9B, with CoE-C and CoE-R performing worse than chance on PopQA, although their AUC using other models are competitive.
These observations highlight the prevalence of proxy failure, where performance varies considerably across models and datasets.

Second, most vanilla scores exhibit poor ECE, undermining their utility as uncertainty indicators.
Most importantly, TAC consistently leads to significant reductions in ECE, and frequently improves AUC.
For instance, on average, with Ministral-8B-Instruct, TAC leads to a change of $-17.48, -26.69, -21.55$ in ECE, $+3.16, +5.48, +2.66$ in AUC, on TriviaQA, SciQ, and PopQA, respectively.
This validates TAC as an important post-hoc calibration step.

\subsection{Comparison with CUE}

In \Cref{fig:cue_trivia}, we compare TAC against CUE.
\Cref{fig:cue_sciq,fig:cue_popqa} show results for SciQ and PopQA, respectively.
On average, with Ministral-3-8B, TAC reports a reduction in ECE of $-14.03$ and an increase in AUC of $+3.77$, compared to CUE's $-11.30$ and $+7.26$.
TAC produces considerably better calibrated scores than CUE on average, without access to BERT-based embeddings as required by CUE.
This highlights TAC's strengths in aligning raw scores to achieve more meaningful uncertainty estimates.

\begin{table}
\centering
\renewcommand{\arraystretch}{1.1}
\setlength{\tabcolsep}{2pt}
\begin{subtable}{\linewidth}
    \resizebox{\linewidth}{!}{
    \begin{tabular}{l ccccc|c}
    \toprule
    
    \headercolorgray
    \textbf{\# Labels} & \textbf{8} & \textbf{16} & \textbf{32} & \textbf{64} & \textbf{128} & \textbf{All} \\
    \textbf{$\Delta$(ECE) ($\downarrow$}) & -12.31 & -12.46 & -12.75 & -12.74 & -16.09 & -17.48 \\
    \textbf{$\Delta$(AUC) ($\uparrow$}) & -7.39 & -2.67 & -0.76 & -0.37 & -0.96 & 3.16 \\
    
    \midrule
    \headercolorgray
    \textbf{\% Corrupt} & \textbf{0.5} & \textbf{0.4} & \textbf{0.3} & \textbf{0.2} & \textbf{0.1} & \textbf{Clean} \\
    \textbf{$\Delta$(ECE) ($\downarrow$}) & -22.81 & -18.95 & -17.96 & -18.54 & -18.78 & -17.48 \\
    \textbf{$\Delta$(AUC) ($\uparrow$}) & -31.15 & -3.32 & -0.62 & 1.49 & 2.76 & 3.16 \\
    
    \bottomrule
    \end{tabular}
    }
    \caption{TriviaQA}
\end{subtable}

\vskip\floatsep

\begin{subtable}{\linewidth}
    \resizebox{\linewidth}{!}{
    \begin{tabular}{l ccccc|c}
    \toprule
    
    \headercolorgray
    \textbf{\# Labels} & \textbf{8} & \textbf{16} & \textbf{32} & \textbf{64} & \textbf{128} & \textbf{All} \\
    \textbf{$\Delta$(ECE) ($\downarrow$}) & -18.74 & -19.30 & -22.08 & -22.86 & -18.59 & -26.69 \\
    \textbf{$\Delta$(AUC) ($\uparrow$}) & 2.03 & -0.38 & -0.79 & 4.66 & 4.86 & 5.48 \\
    \midrule
    \headercolorgray
    \textbf{\% Corrupt} & \textbf{0.5} & \textbf{0.4} & \textbf{0.3} & \textbf{0.2} & \textbf{0.1} & \textbf{Clean} \\
    \textbf{$\Delta$(ECE) ($\downarrow$}) & -22.27 & -21.08 & -23.62 & -25.45 & -25.72 &  -26.69 \\
    \textbf{$\Delta$(AUC) ($\uparrow$}) & -14.71 & -0.56 & 3.89 & 4.15 & 4.64 & 5.48 \\
    
    \bottomrule
    \end{tabular}
    }
    \caption{SciQ}
\end{subtable}

\vskip\floatsep

\begin{subtable}{\linewidth}
    \resizebox{\linewidth}{!}{
    \begin{tabular}{l ccccc|c}
    \toprule
    
    \headercolorgray
    \textbf{\# Labels} & \textbf{8} & \textbf{16} & \textbf{32} & \textbf{64} & \textbf{128} & \textbf{All} \\
    \textbf{$\Delta$(ECE) ($\downarrow$}) & -16.83 & -20.02 & -11.58 & -6.18 & -12.97 & -21.55 \\
    \textbf{$\Delta$(AUC) ($\uparrow$}) & -51.30 & -0.09 & -6.13 & -2.58 & 0.25 & 2.66 \\
    
    \midrule
    \headercolorgray
    \textbf{\% Corrupt} & \textbf{0.5} & \textbf{0.4} & \textbf{0.3} & \textbf{0.2} & \textbf{0.1} & \textbf{Clean} \\
    \textbf{$\Delta$(ECE) ($\downarrow$}) & -5.60 & -7.06 & -12.63 & -15.64 & -24.23 & -21.55 \\
    \textbf{$\Delta$(AUC) ($\uparrow$}) & -46.52 & -4.62 & 1.30 & 2.75 & 2.83 & 2.66 \\
    
    \bottomrule
    \end{tabular}
    }
    \caption{PopQA}
\end{subtable}

\caption{Performance (average $\Delta$) under few-shot and noisy label constraints.}

\label{tab:nosiy}
\end{table}

\subsection{Noisy and Few-shot Supervision}
\Cref{tab:nosiy} summarises the average change in ECE and AUC after TAC with few-shot or noisy labels.
Individual evaluation metrics are deferred to \Cref{fig:ds,fig:cr}.
With as few as 32 labels or up to 30\% of corrupted labels, there is minimal performance degradation to the full and clean label set. This hints at TAC as a realistic calibration protocol that does not demand perfect supervision.

\begin{table}[htbp]
\centering
\setlength{\tabcolsep}{1mm}
\renewcommand{\arraystretch}{1}

\begin{subtable}{\linewidth}
    \resizebox{\linewidth}{!}{
    \centering
    \begin{tabular}{@{} l| *{3}{c} @{}}
         \toprule
         \multirow{2}{*}{\textbf{Train on}} & \multicolumn{3}{c}{\textbf{Test on}} \\
         \cmidrule(l){2-4}
         & TriviaQA & SciQ & PopQA \\
         \midrule 
         TriviaQA & \makecell[c]{9.60 / 73.35\\\small\textcolor{darkgreen}{(-17.48)} / \small\textcolor{darkgreen}{(+3.16)}} & \makecell[c]{13.30 / 69.40\\\small\textcolor{darkgreen}{(-19.84)} / \small\textcolor{darkgreen}{(+5.00)}} & \makecell{25.98 / 77.35 \\\small\textcolor{darkgreen}{(-9.35)} / \small\textcolor{darkgreen}{(+0.19)}}\\
         \midrule
         SciQ & \makecell[c]{12.58 / 71.40 \\\small\textcolor{darkgreen}{(-14.50)} / \small\textcolor{darkgreen}{(+1.21)}} & \makecell[c]{6.45 / 69.88 \\\small\textcolor{darkgreen}{(-26.69)} / \small\textcolor{darkgreen}{(+5.48)}} & \makecell[c]{16.73 / 71.12 \\\small\textcolor{darkgreen}{(-18.60)} / \small\textcolor{darkred}{(-6.04)}}\\
         \midrule
         PopQA & \makecell[c]{20.56 / 70.35\\\small\textcolor{darkgreen}{(-6.52) / \small\textcolor{darkgreen}{(+0.16)}}} & \makecell[c]{18.76 / 64.12 \\\small\textcolor{darkgreen}{(-14.44)} / \small\textcolor{darkred}{(-0.28)}} &  \makecell[c]{13.78 / 79.82 \\\small\textcolor{darkgreen}{(-21.55)} / \small\textcolor{darkgreen}{(+2.66)}} \\
         \bottomrule
    \end{tabular}
    }
    \caption{TAC}
\end{subtable}

\vskip\floatsep

\begin{subtable}{\linewidth}
    \resizebox{\linewidth}{!}{
    \centering
    \begin{tabular}{@{} l| *{3}{c} @{}}
         \toprule
         \multirow{2}{*}{\textbf{Train on}} & \multicolumn{3}{c}{\textbf{Test on}} \\
         \cmidrule(l){2-4}
         & TriviaQA & SciQ & PopQA \\
         \midrule 
         TriviaQA & \makecell[c]{15.79 / 76.10\\\small\textcolor{darkgreen}{(-11.29)} / \small\textcolor{darkgreen}{(+5.91)}} & \makecell[c]{31.57 / 65.66\\\small\textcolor{darkgreen}{(-1.56)} / \small\textcolor{darkgreen}{(+1.26)}} & \makecell{ 31.68 / 83.32 \\\small\textcolor{darkgreen}{(-3.65)} / \small\textcolor{darkgreen}{(+6.16)}}\\
         \midrule
         SciQ & \makecell[c]{ 4.65 / 66.45 \\\small\textcolor{darkgreen}{(-22.43)} / \small\textcolor{darkred}{(-3.74)}} & \makecell[c]{ 20.03 / 73.37\\\small\textcolor{darkgreen}{(-13.11)} / \small\textcolor{darkgreen}{(+8.97)}} & \makecell{23.39 / 73.06\\\small\textcolor{darkgreen}{(-11.93)} / \small\textcolor{darkred}{(-4.10)}} \\
         \midrule
         PopQA & \makecell[c]{ 18.61 / 70.62 \\\small\textcolor{darkgreen}{(-8.47) / \small\textcolor{darkgreen}{(+0.43)}}} & \makecell[c]{ 19.51 / 62.96 \\\small\textcolor{darkgreen}{(-13.63)} / \small\textcolor{darkred}{(-1.44)}} &  \makecell[c]{ 17.65 / 84.06 \\\small\textcolor{darkgreen}{(-17.68)} / \small\textcolor{darkgreen}{(+6.90)}} \\
         \bottomrule
    \end{tabular}
    }
    \caption{CUE}
\end{subtable}

\caption{Average performance and $\Delta$ compared to vanilla scores under transfer settings.}
\label{tab:transfer}
\end{table}

\paragraph{Cross-task transfer.}
Complementing the noisy and few-shot constraints, we further consider another realistic out-of-distribution (OOD) setting, where the scores and correctness labels are obtained from a different query set.
Under OOD constraints, TAC reports changes in ECE / AUC of $-13.88 / +0.24$, compared to CUE's $-10.28 / -0.23$, over the vanilla score.
We break down the average changes by each individual train--test configuration in \Cref{tab:transfer}.

\begin{figure}[htbp]
    \centering
    \begin{subfigure}{\linewidth}
        \includegraphics[width=\linewidth]{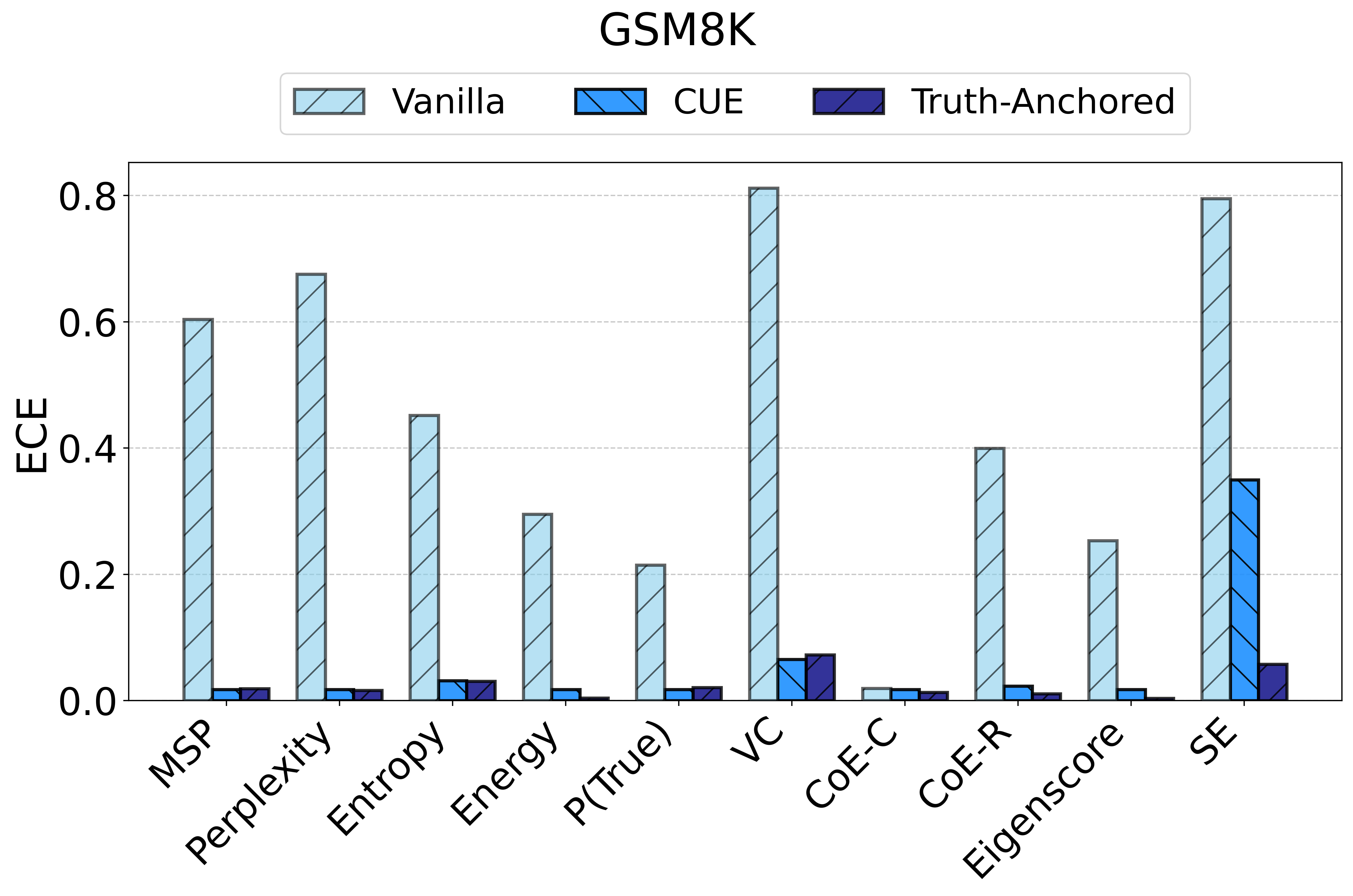}
    \end{subfigure}
    
    \begin{subfigure}{\linewidth}
        \includegraphics[width=\linewidth]{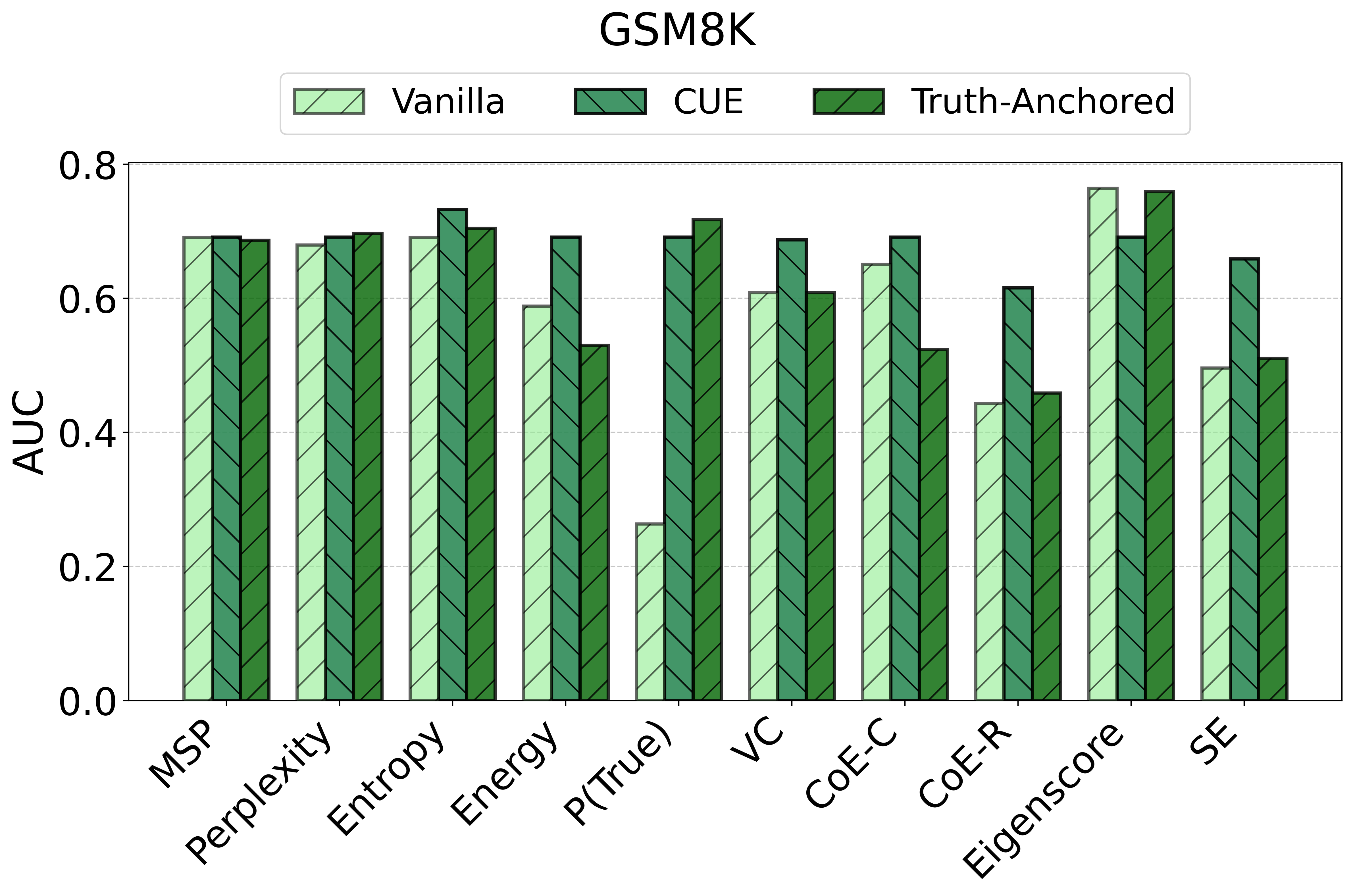}
    \end{subfigure}
    \caption{Performance on GSM8K.}
    \label{fig:cue_gsm}
\end{figure}

\subsection{Specialised Datasets}
We have mainly focused on fact-heavy question-answering dataset.
Here, we provide a preliminary analysis of applying our TAC to a more specialised dataset, namely the mathematical reasoning data \textbf{GSM8K} \citep{cobbe2021training}, which requires the language model to reason in multiple steps to answer well.
\Cref{fig:cue_gsm} shows the results.

\subsection{Inter-score Anchoring}

TAC explicitly aims to improve calibration, but it cannot recover high discriminability if the initial score is uninformative.
To achieve this, additional information is necessary.
Here, we explore this setting, where we assume access to two sources of uncertainty scores, and combine the two scores to inject external information not available within each score alone.
Specifically, we consider distinct raw scores $S_i$ and $S_j$, $i\neq j$, and learn the mapping $m_\theta(S_\mathrm{concat})$ from the concatenated input $S=[S_i,S_j]$, our pairwise variant of TAC.
Uncertainty scores are concatenated together pairwise as a training feature as shown in \Cref{fig:inter}.
We provide the results in \Cref{fig:inter}.
With a combination of two uncertainty scores, stronger discrimination can be recovered.
We hypothesise this is because each score captures a different facet of truth uncertainty, especially for those scores belonging to different families, or formulated using distinct sources of information.
For example, CoE-C and VC scores are derived with different information.
CoE-C utilises internal state information, by tracking the changes between successive layers in their embeddings, whereas VC operates at the output token level. With pairwise TAC, AUC noticeably improves.
However, it remains to be seen how we can optimally combine scores to achieve even better discriminability.

\begin{figure}
    \centering
    \includegraphics[width=\linewidth]{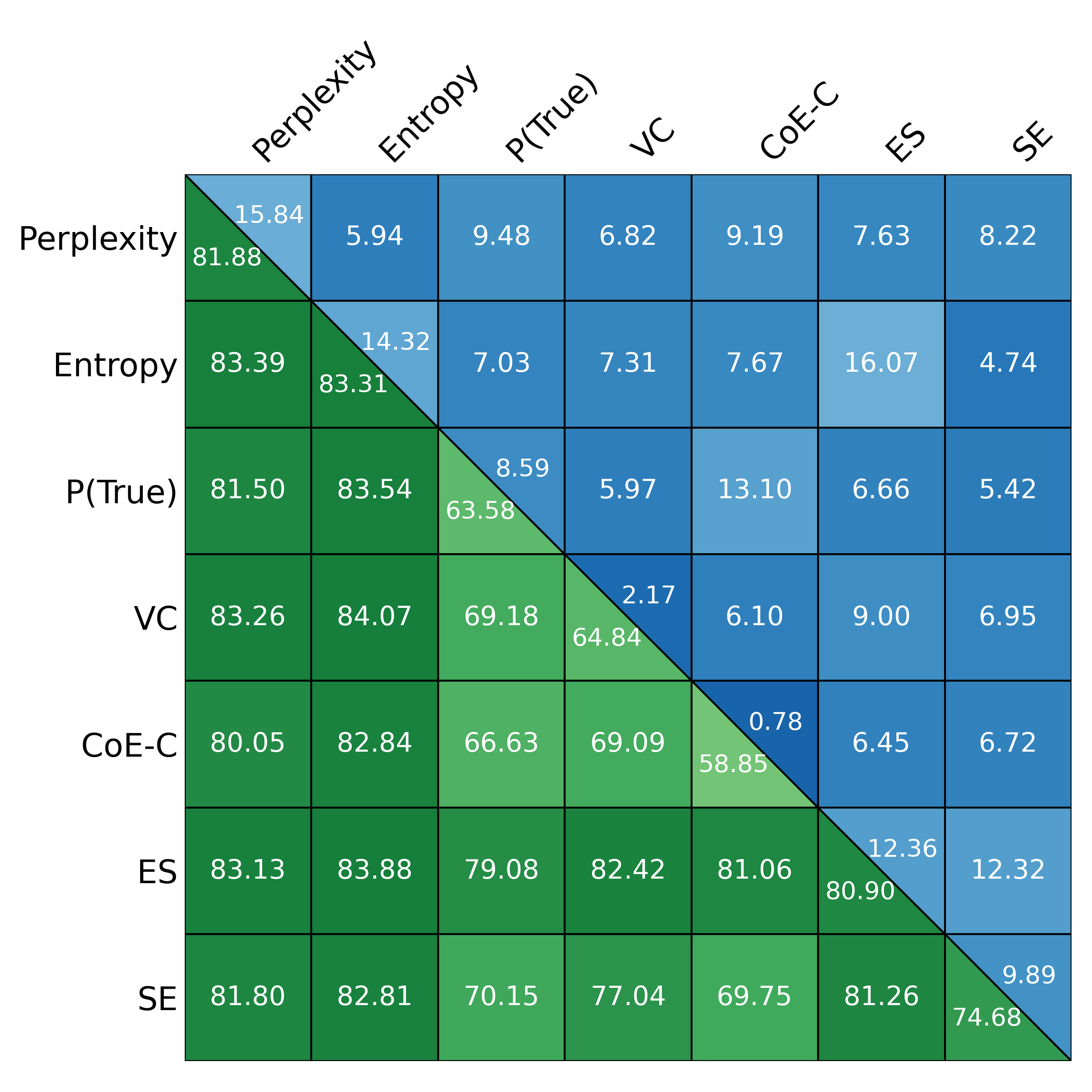}
    \caption{Pairwise TAC. Upper triangle shows ECE, and lower triangle shows AUC. Darker colours indicate more favourable results.}
    \label{fig:inter}
\end{figure}

\subsection{Ablation}

\Cref{fig:ablation} presents the ablation of $\phi_\textrm{rank}$. While the ranking loss, when $\phi_{\textrm{rank}}>0$, can result in gains, performance remains unstable.
\label{subsec:ablation}
\begin{figure}
    \centering
    \includegraphics[width=\linewidth]{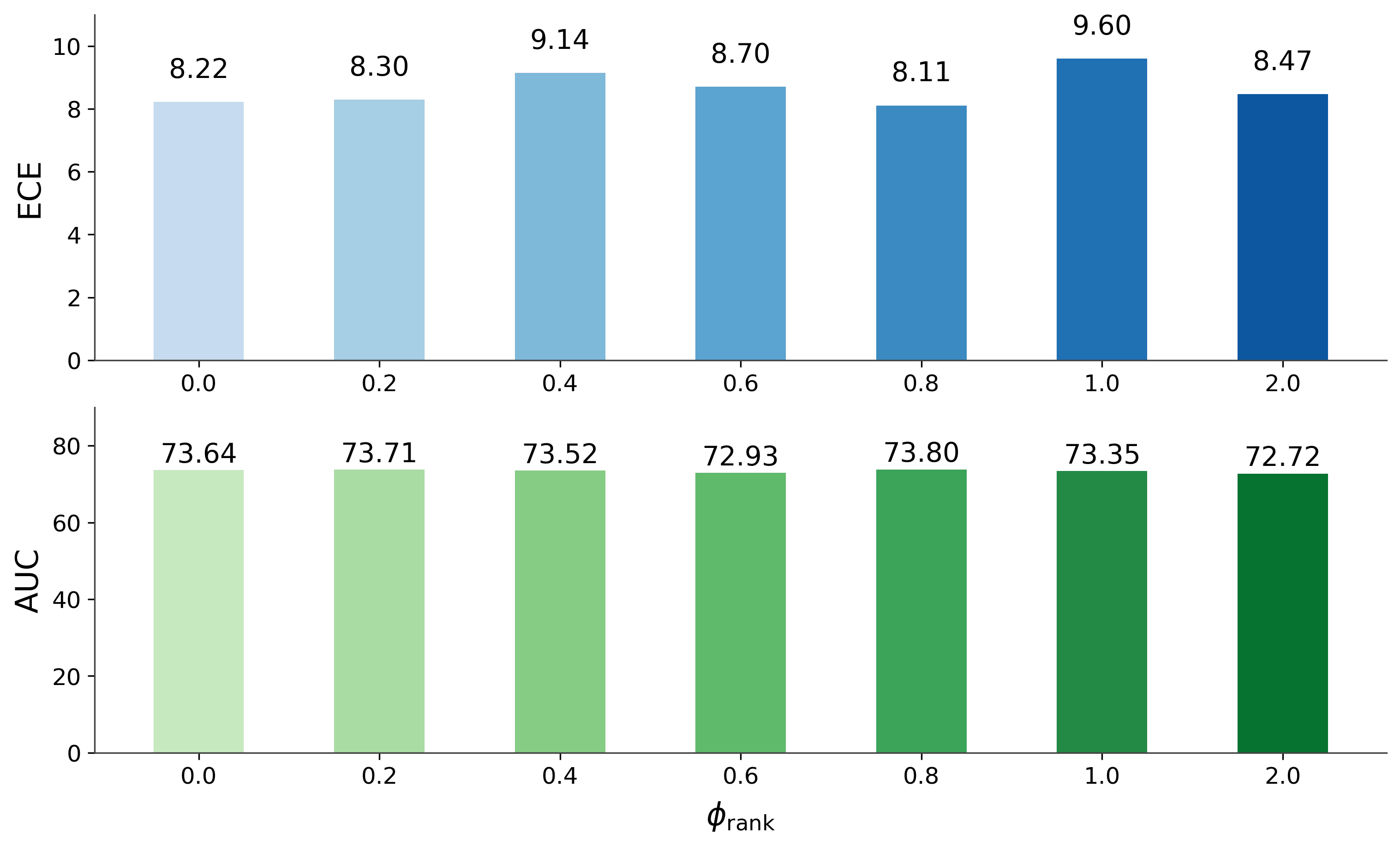}
    \caption{Ablation of $\phi_\textrm{rank}$ hyperparameter.}
    \label{fig:ablation}
\end{figure}

\section{Conclusion}

In this work, we formalise proxy failure in LLM uncertainty estimation, explaining the phenomenon where performance of widely used heuristic scores degrades substantially across configurations. We argue that truth anchoring (TAC), post-hoc calibration of uncertainty scores, even with imperfect supervision, offers a principled and practical remedy, paving the way for more reliable and meaningful uncertainty estimates.

\section*{Limitations}

Truth anchoring successfully targets calibration, as supported by the significant reduction in ECE. On the other hand, discriminability is not guaranteed to improve, and can sometimes slightly worsen. Inter-score anchoring can be useful, but the optimal combination remains to be analysed.
Furthermore, the scope of our study is limited to fact-heavy QA datasets. Future works may consider other specialised or long-form reasoning-intensive tasks.
Finally, we examined moderately sized LLMs, ranging from 3 to 8 billion parameters. Experiments with larger and more capable models could strengthen our claims.
 
\section*{Ethical Considerations}

Even though our method is shown to improve confidence calibration, by definition, it can only provide meaningful uncertainty measures in applications with clear ground truth. Users should avoid deploying Truth Anchoring in open-ended tasks, due to the risk of biased or subjective outputs being represented as confident.

\bibliography{custom}

\appendix

\section{Proofs}

In this section, we provide the proofs to \Cref{prop:miAUC} and \Cref{prop:peFail}.

\label{sec:proofs}

\begin{lemma}
\label{lem:aucTV}

For any score $S$, with $P_+=P(S\mid C=1)$ and $P_-=P(S\mid C=0)$,

\begin{equation}
    \left|\mathrm{AUC}(S) - \frac{1}{2}\right| \leq \delta(P_+, P_-),
\end{equation}
where $\delta$ is the total variation distance.

\end{lemma}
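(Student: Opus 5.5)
We need to prove $|\mathrm{AUC}(S)-\tfrac12|\le \delta(P_+,P_-)$. The plan is to write the AUC as an expectation over independent draws and compare it with the value obtained when both draws come from the same distribution, exploiting that total variation bounds the difference of expectations of bounded functions.

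\textbf{Handling ties.} One caveat: with $\mathrm{AUC}(S)=P(S_+>S_-)$ taken literally and $S$ having atoms, the claim can fail. For example, if $S$ is constant, then $\mathrm{AUC}=0$ while $\delta=0$. So I would adopt the standard tie convention
\[
\mathrm{AUC}(S)=P(S_+>S_-)+\tfrac12 P(S_+=S_-),
\]
or equivalently assume $P_+$ and $P_-$ are atomless, and note this in the proof. Define the kernel $h(a,b)=\mathbf 1[a>b]+\tfrac12\mathbf 1[a=b]\in[0,1]$. Then $\mathrm{AUC}(S)=\mathbb E[h(S_+,S_-)]$ with $S_+\sim P_+$ and $S_-\sim P_-$ independent.

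\textbf{Symmetric reference value.} Let $S_-'$ be an independent copy with law $P_-$. Because $h(a,b)+h(b,a)=1$ and $(S_-,S_-')$ is exchangeable, we get $\mathbb E[h(S_-',S_-)]=\tfrac12$. Hence
\[
\mathrm{AUC}(S)-\tfrac12=\mathbb E[h(S_+,S_-)]-\mathbb E[h(S_-',S_-)].
\]

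\textbf{Conditioning and the TV bound.} Condition on $S_-=b$. Both terms are now expectations of the function $f_b(\cdot)=h(\cdot,b)$, which takes values in $[0,1]$, under $P_+$ and $P_-$ respectively. For any measurable $f$ with values in $[0,1]$, we have $|\mathbb E_{P_+}f-\mathbb E_{P_-}f|\le\delta(P_+,P_-)$. This follows by writing $f=\int_0^1\mathbf 1[f>t]\,dt$ and using $|P_+(A)-P_-(A)|\le\delta$ for each level set $A$. Averaging over $b\sim P_-$ and applying the triangle inequality (Jensen) then gives the lemma.

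\textbf{Main obstacle.} The only delicate point is the tie convention and measurability of $f_b$. The latter is immediate, since the level sets of $f_b$ are half-lines or points. The rest is routine.
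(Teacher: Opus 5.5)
Your proof is correct and has the same skeleton as the paper's. Both compare $\mathrm{AUC}(S)$ with the value $\tfrac12$ obtained by replacing one draw with an independent copy from the other class, condition on the remaining draw, bound the pointwise gap by $\delta(P_+,P_-)$ using half-lines, and integrate. The paper conditions on $S_+=s$ and compares $F_-(s)$ with $F_+(s)$; you condition on $S_-=b$, which is the mirror image.

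What your version adds is generality. The paper's proof assumes $P_\pm$ have densities, which is what justifies $P(S_+>S_+')=\tfrac12$ and $P(S_-<s)=F_-(s)$, even though the lemma is stated for any score. Your constant-score example correctly shows that the strict-inequality statement is false once atoms appear. So either an atomless assumption or your half-credit tie convention is genuinely needed.

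This matters in the paper's own setting. In the construction behind Proposition~\ref{prop:peFail}, a score based on summed token entropy equals $\log 2$ for every response under $P_0$. Its strict AUC there is therefore $0$, not $\tfrac12$, and your convention is what keeps that chain of results valid. The paper's route is shorter when densities exist, while yours also covers discrete empirical scores.

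Your layer-cake step is more than you need. Since $\mathbb{E}_P f_b=\tfrac12\bigl[P((b,\infty))+P([b,\infty))\bigr]$, it is already an average of two half-line probabilities, each within $\delta$ of its counterpart.
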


\begin{proof}

Let $S_+ \sim P_+$ and $S_- \sim P_-$ be continuous random variables with PDFs $p_+(s)$ and $p_-(s)$, and CDFs $F_+(s)$ and $F_-(s)$, respectively.
By law of total probability, we can condition AUC on the positive class $S_+=s$
\begin{equation}
\label{eq:auc1}
\begin{split}
    \mathrm{AUC} &= P(S_+>S_-)\\
    &= \int_{-\infty}^{\infty}P(S_-<S_+\mid S_+=s)\, p_+(s) \,ds\\
    &= \int_{-\infty}^{\infty}F_-(s)\, p_+(s) \,ds
\end{split}
\end{equation}

Next, draw $S_+'$ such that $S_+,S_+'\stackrel{\textit{i.i.d}}{\sim} P_+$. Similarly,
\begin{equation}
\label{eq:auc2}
    \frac{1}{2} = P(S_+>S_+') = \int_{-\infty}^{\infty} F_+(s)\,p_+(s)\,ds
\end{equation}

Then, combining \Cref{eq:auc1} and \ref{eq:auc2}, and applying absolute value to both sides and the triangle inequality yields
\begin{equation}
    \left|\mathrm{AUC} - \frac{1}{2} \right|\leq \int_{-\infty}^{\infty} \left|F_-(s) - F_+(s)\right|p_+(s)\,ds
\end{equation}

By definition, the total variation $\delta$ between $P_+$ and $P_-$ is
\begin{equation}
    \delta\,(P_+, P_-) = \sup_{A}\left| P_+(A) -P_-(A) \right|
\end{equation}

Taking $A=(-\infty, s]$, $F_+(s) = P_+(A)$ and $F_-(s) = P_-(A)$, for all $s$, we have
\begin{equation}
    \left| F_+(s) -F_-(s)\right| \leq \delta\,(P_+, P_-)
\end{equation}

Thus,

\begin{equation}
\begin{split}
    \left|\mathrm{AUC} - \frac{1}{2} \right|&\leq \int_{-\infty}^{\infty} \delta\,(P_+,P_-)p_+(s)\,ds\\
    &= \delta\,(P_+,P_-) \int_{-\infty}^{\infty}p_+(s)\,ds\\
    &= \delta\,(P_+,P_-)
\end{split}
\end{equation}
\end{proof}

\miAUC*

\begin{proof}

Let $M=P(S)=p_cP_+ + (1-p_c)P_-$. Since $C$ is binary, mutual information can be decomposed as

\begin{equation}
\label{eq:mi_decomp}
    I(C,S) = p_c \mathrm{KL}(P_+\|M) + (1-p_c)\mathrm{KL}(P_-\|M)
\end{equation}
We first relate $P_+$ and $P_-$ to $M$. Because 

\begin{equation}
\begin{split}
    P_+ - M &= P_+ - (p_cP_+ + (1-p_c) P_-) \\
    &= (1-p_c)(P_+-P_-),
\end{split}
\end{equation}
so
\begin{equation}
    \delta(P_+, M) = (1-p_c)\,\delta(P_+,P_-)
\end{equation}
Similarly, $M-P_- = p_c (P_+ - P_-)$, so
\begin{equation}
    \delta(P_-, M) = p_c\,\delta(P_+,P_-)
\end{equation}
Applying Pinsker's inequality to each term in \Cref{eq:mi_decomp},

\begin{equation}
\begin{split}
    \mathrm{KL}(P_+\|M) \geq&\; 2(1-p_c)^2\,\delta(P_+,P_-)^2,\\
    \mathrm{KL}(P_-\|M) &\geq\; 2p_c^2\,\delta(P_+,P_-)^2
\end{split}
\end{equation}
Substituting into \Cref{eq:mi_decomp} yields 

\begin{equation}
\begin{split}
    I(C,S) \geq &\; 2p_c(1-p_c)^2\,\delta(P_+,P_-)^2 \\
    &+  2(1-p_c)p_c^2\,\delta(P_+,P_-)^2 \\
    = & 2p_c(1-p_c)\delta\,(P_+,P_-)^2
\end{split}
\end{equation}
Hence, 

\begin{equation}
    \delta\,(P_+,P_-) \leq \sqrt{\frac{I(C,S)}{2p_c(1-p_c)}}
\end{equation}
Finally, by \Cref{lem:aucTV},

\begin{equation}
    \left| \mathrm{AUC(S)} - \frac{1}{2} \right| \; \leq \; \sqrt{\frac{I(C,S)}{2p_c(1-p_c)}},
\end{equation}
\end{proof}

\label{subsec:more_models}

\peFail*

\begin{proof}

Let $\varepsilon>0$. Assume for every query $x$, both correct and wrong responses can be generated. For each $x$, choose one correct $y_+(x)$ and one wrong response $y_-(x)$. Let $\tau(x)$ denote the first token position at which the two sequences differ.

\paragraph{Step 1: Construct a base setting with zero information.} Define a conditional distribution $P_0(\cdot\mid x)$ by $P_0(y_+\mid x)=P_0(y_-\mid x)=1/2$.

Under $P_0$, for any $x$, the next-token distribution is deterministic at every step except the first divergence time $\tau(x)$, where it is uniform over two tokens. Hence, along either support trajectory,

\begin{equation}
    H\left(P_0(\cdot\mid x,y_{<t})\right)
    =
    \begin{cases}
    \log 2, & t=\tau(x),\\
    0, & t\neq \tau(x).
    \end{cases}
\end{equation}

The entire entropy profile is identical along $y^{(1)}(x)$ and $y^{(0)}(x)$, and so
\begin{equation}
    S_{pe}(x,y^{(1)}(x))=S_{pe}(x,y^{(0)}(x))=s_0(x).    
\end{equation}

Under $P_0$, $P_+=P_-$, by construction, $I_{P_0}(C,S_{pe})=0$.

\paragraph{Step 2: Mix in with arbitrary component.} Let $P_1(\cdot \mid x)$ be any conditional distribution, for example, the distribution induced by the actual deployed LLM. Let $B\sim\mathrm{Bern}(\lambda)$, where $\lambda\in (0,1)$, and 

\begin{equation}
    P(\cdot\mid x) = (1-\lambda) P_0 + \lambda P_1
\end{equation}

\paragraph{Step 3: Bound mutual information.} Using chain rule,

\begin{equation}
\begin{split}
    I(C;S_{pe}) &\leq I(C;S_{pe},B) \\
    &= I(C;B) + I(C;S_{pe}|B)
\end{split}
\end{equation}
Since $B\sim\mathrm{Bern}(\lambda)$, $I(C,B)\leq H(B) = h(\lambda)$, where $h(\lambda)$ is the binary entropy.
Further,

\begin{equation}
\begin{split}
    I(C;S_{pe}\mid B) & = (1-\lambda) I(C;S_{pe}\mid B=0)\\
    &+ \lambda I(C;S_{pe}\mid B=1)
\end{split}
\end{equation}

Under $B=0$, the distribution is exactly $P_0$ with zero information. Under $B=1$, since $C$ is binary, 
\begin{equation}
    I(C;S_{pe}\mid B=1) \leq H(C\mid B=1) \leq \log2
\end{equation}

Therefore,
\begin{equation}
    I(C;S_{pe}) \leq h(\lambda) + \lambda \log2
\end{equation}

RHS tends to $0$ as $\lambda \to 0$. Hence, for any $\varepsilon$, $\exists\lambda$ such that $h(\lambda) + \lambda \log2\leq \varepsilon$.
This yields $I(C.S_{pe})\leq \varepsilon$, and the AUROC bound follows directly from \Cref{prop:miAUC}.

\end{proof}

\section{Supplementary Experiments}
\label{sec:add_expt}

\paragraph{More comparison with CUE.} We present comparison with CUE on SciQ in \Cref{fig:cue_sciq} and on PopQA in \Cref{fig:cue_popqa}.

\begin{figure}[htbp]
    \centering
    \begin{subfigure}{\linewidth}
        \includegraphics[width=\linewidth]{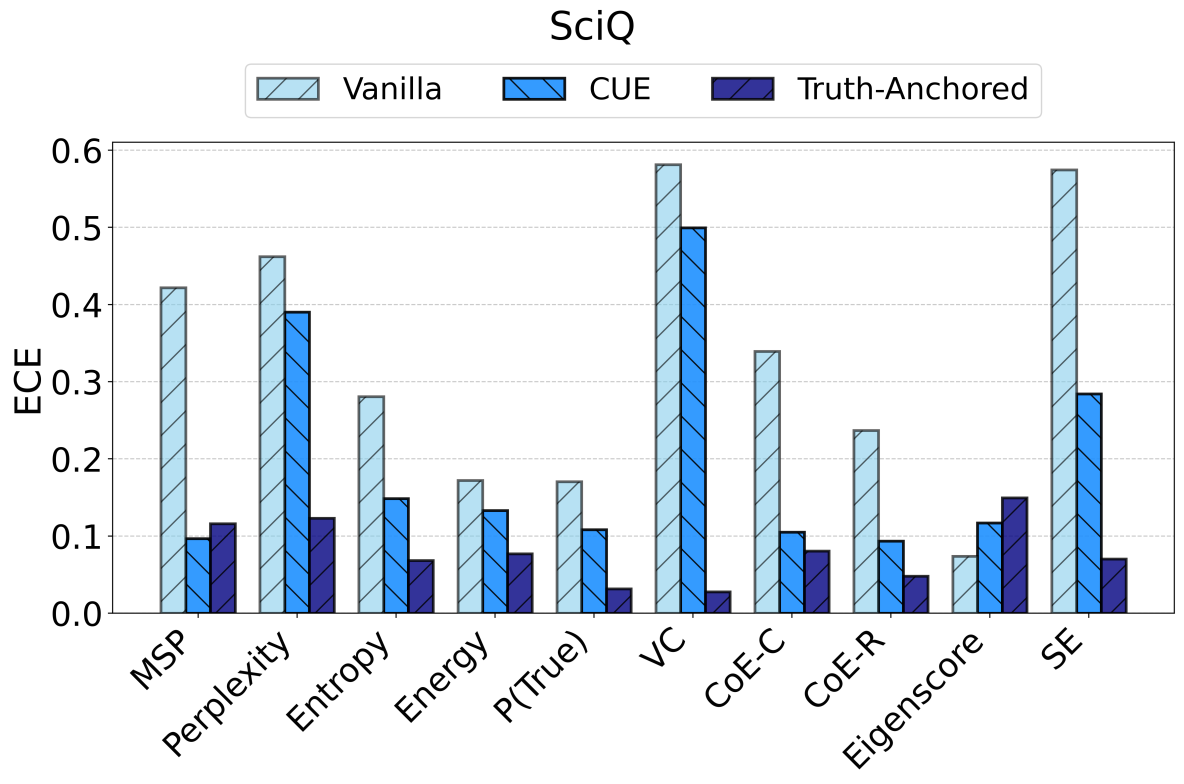}
    \end{subfigure}
    
    \begin{subfigure}{\linewidth}
        \includegraphics[width=\linewidth]{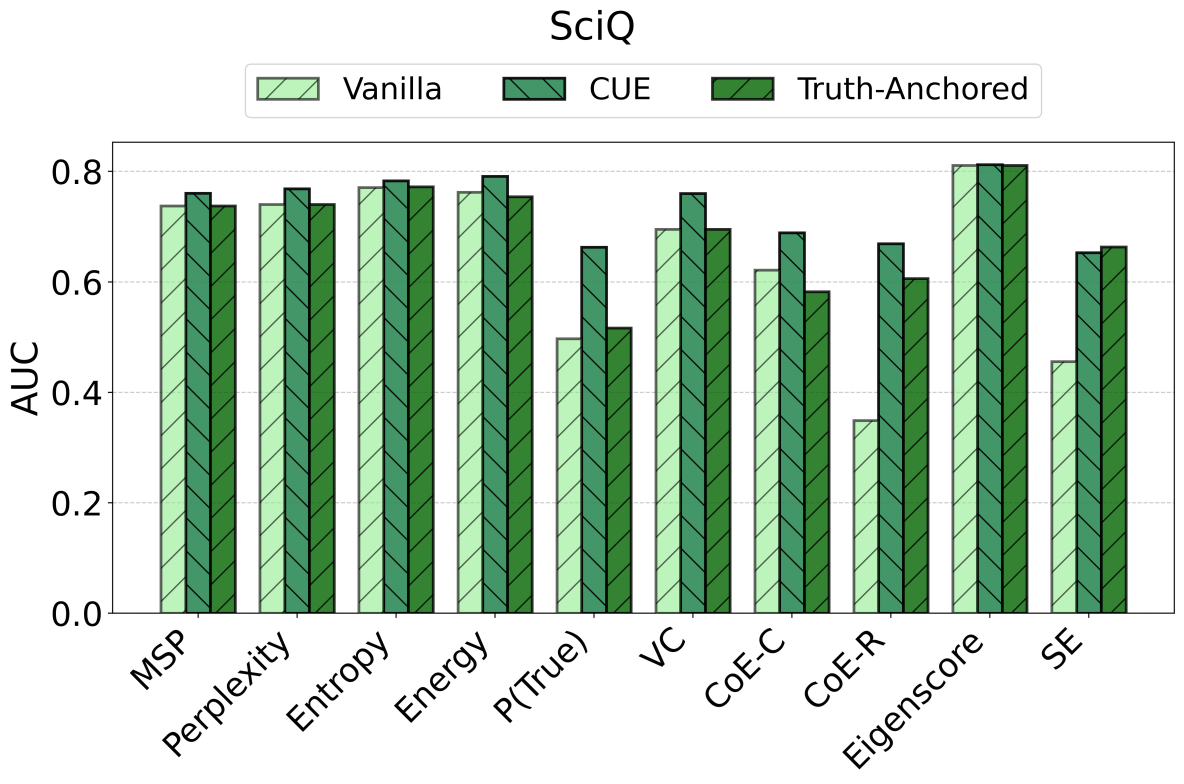}
    \end{subfigure}
    \caption{Performance of vanilla, CUE, and truth-anchored scores.}
    \label{fig:cue_sciq}
\end{figure}

\begin{figure}[!ht]
    \centering
    \begin{subfigure}{\linewidth}
        \includegraphics[width=\linewidth]{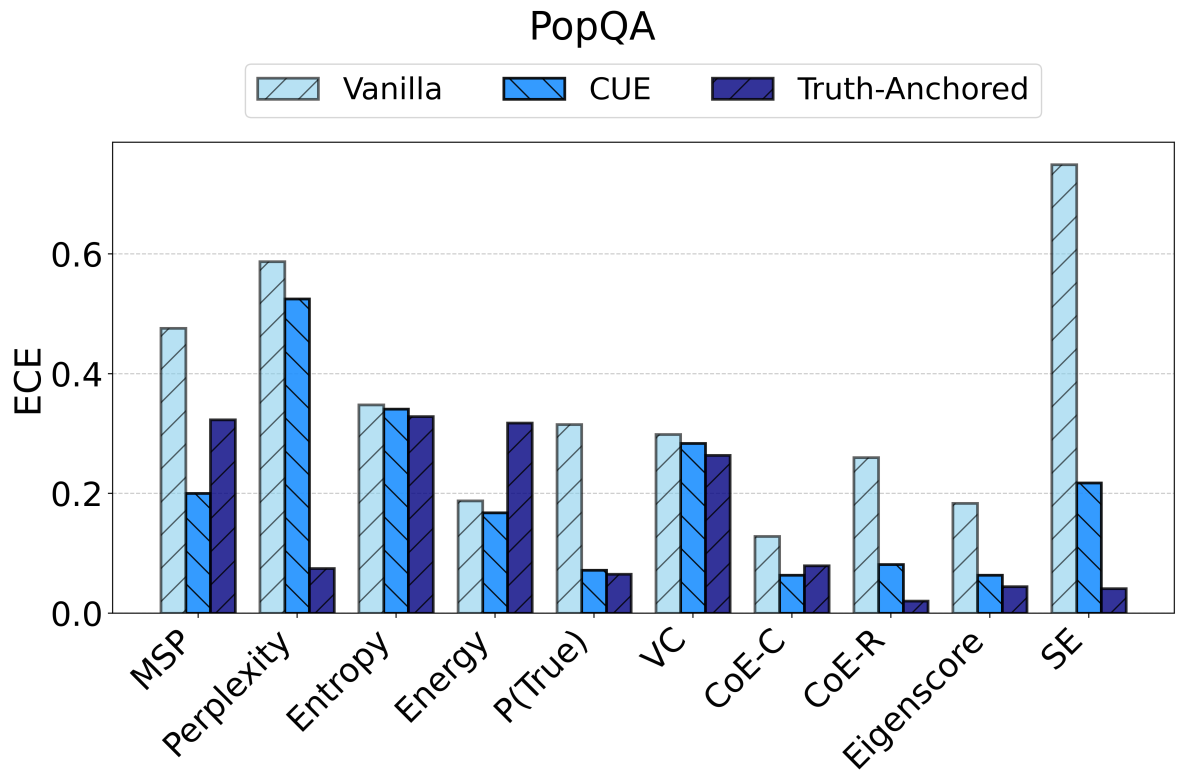}
    \end{subfigure}
    
    \begin{subfigure}{\linewidth}
        \includegraphics[width=\linewidth]{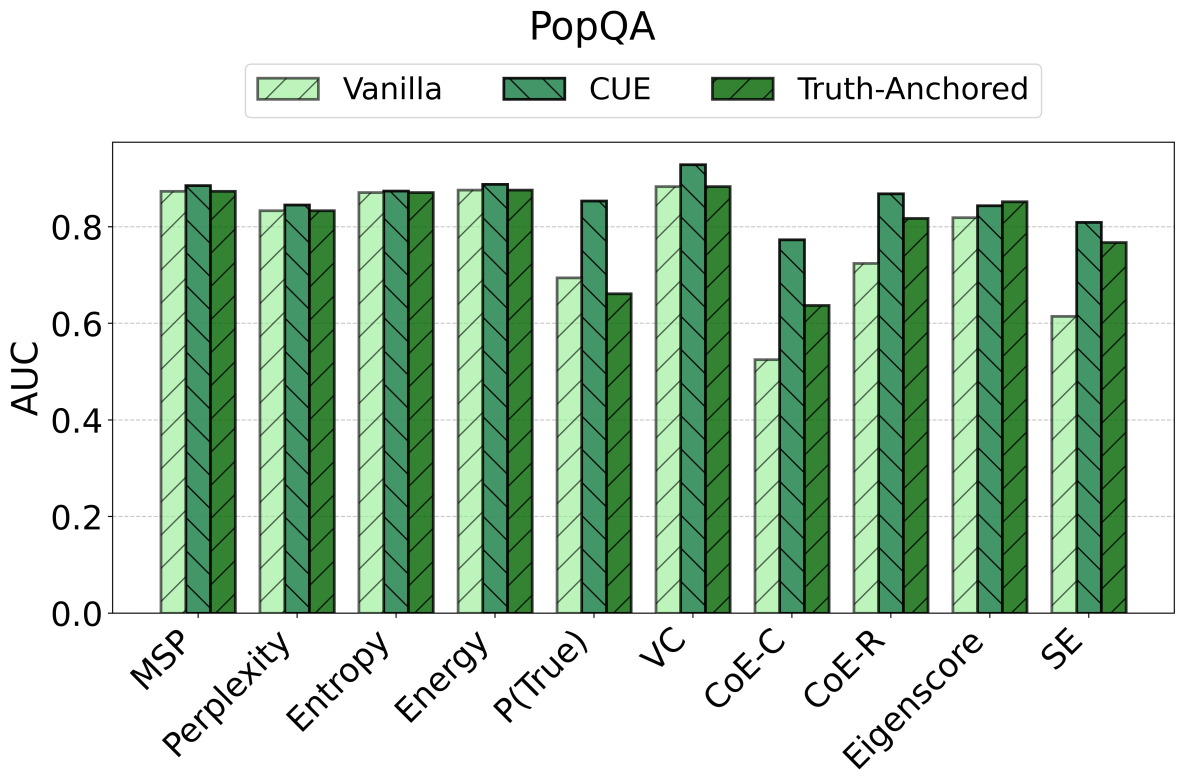}
    \end{subfigure}
    \caption{Performance of vanilla, CUE, and truth-anchored scores.}
    \label{fig:cue_popqa}
\end{figure}

\begin{table*}[!ht]
\centering
\renewcommand{\arraystretch}{1.25}
\setlength{\tabcolsep}{2pt}
\resizebox{\textwidth}{!}{
\begin{tabular}{l ccc|ccc|ccc|ccc|ccc|ccc}
\toprule
& \multicolumn{6}{c}{\textbf{TriviaQA}} & \multicolumn{6}{c}{\textbf{SciQA}} & \multicolumn{6}{c}{\textbf{PopQA}} \\
\cmidrule (lr){2-7}\cmidrule (lr){8-13} \cmidrule (lr){14-19}
& \multicolumn{3}{c}{\textbf{ECE($\downarrow$)}} & \multicolumn{3}{c}{\textbf{AUROC($\uparrow$)}} & \multicolumn{3}{c}{\textbf{ECE($\downarrow$)}} & \multicolumn{3}{c}{\textbf{AUROC($\uparrow$)}} & \multicolumn{3}{c}{\textbf{ECE($\downarrow$)}} & \multicolumn{3}{c}{\textbf{AUROC($\uparrow$)}}\\
\cmidrule (lr){2-4}\cmidrule (lr){5-7}\cmidrule (lr){8-10}\cmidrule (lr){11-13} \cmidrule (lr){14-16}\cmidrule (lr){17-19}
\textbf{Method} & \textbf{Vanilla} & \textbf{TAC} & \textbf{$\Delta$}& \textbf{Vanilla} & \textbf{TAC} & \textbf{$\Delta$} & \textbf{Vanilla} & \textbf{TAC} & \textbf{$\Delta$}& \textbf{Vanilla} & \textbf{TAC} & \textbf{$\Delta$} & \textbf{Vanilla} & \textbf{TAC} & \textbf{$\Delta$}& \textbf{Vanilla} & \textbf{TAC} & \textbf{$\Delta$} \\

\midrule
\headercolor
\multicolumn{19}{c}{\textbf{Llama-3.2-3B}} \\

MSP & 48.25 & 13.41 & \textcolor{darkgreen}{-34.84} & 76.15 & 76.15 & 0.00 & 47.96 & 13.23 & \textcolor{darkgreen}{-34.73} & 75.79 & 75.79 & 0.00 & 70.96 & 2.74 & \textcolor{darkgreen}{-68.22} & 53.46 & 61.51 & \textcolor{darkgreen}{8.05} \\
Perplexity & 58.46 & 6.51 & \textcolor{darkgreen}{-51.95} & 69.19 & 67.11 & \textcolor{darkred}{-2.09} & 50.47 & 8.70 & \textcolor{darkgreen}{-41.77} & 72.88 & 72.25 & \textcolor{darkred}{-0.63} & 80.05 & 2.28 & \textcolor{darkgreen}{-77.76} & 48.25 & 57.41 & \textcolor{darkgreen}{9.16} \\
Entropy & 37.83 & 18.88 & \textcolor{darkgreen}{-18.95} & 84.60 & 84.60 & 0.00 & 35.06 & 15.56 & \textcolor{darkgreen}{-19.49} & 76.28 & 76.28 & 0.00 & 58.48 & 1.71 & \textcolor{darkgreen}{-56.77} & 65.41 & 64.20 & \textcolor{darkred}{-1.21} \\
Energy & 17.07 & 5.39 & \textcolor{darkgreen}{-11.68} & 78.92 & 79.12 & \textcolor{darkgreen}{0.20} & 15.55 & 13.84 & \textcolor{darkgreen}{-1.71} & 74.09 & 74.09 & 0.00 & 35.56 & 5.95 & \textcolor{darkgreen}{-29.61} & 69.93 & 63.01 & \textcolor{darkred}{-6.92} \\
P(True) & 57.55 & 6.15 & \textcolor{darkgreen}{-51.40} & 50.74 & 50.99 & \textcolor{darkgreen}{0.25} & 61.05 & 2.33 & \textcolor{darkgreen}{-58.72} & 61.00 & 61.00 & 0.00 & 71.68 & 25.49 & \textcolor{darkgreen}{-46.19} & 84.13 & 84.13 & 0.00 \\
VC & 17.78 & 20.38 & \textcolor{darkred}{2.60} & 85.56 & 85.56 & 0.00 & 47.73 & 14.02 & \textcolor{darkgreen}{-33.71} & 66.41 & 66.41 & 0.00 & 8.15 & 30.11 & \textcolor{darkred}{21.96} & 94.77 & 94.77 & 0.00 \\
CoE-C & 20.90 & 7.63 & \textcolor{darkgreen}{-13.27} & 59.18 & 73.38 & \textcolor{darkgreen}{14.20} & 17.62 & 7.28 & \textcolor{darkgreen}{-10.33} & 69.79 & 70.36 & \textcolor{darkgreen}{0.57} & 2.50 & 1.61 & \textcolor{darkgreen}{-0.89} & 52.84 & 67.07 & \textcolor{darkgreen}{14.23} \\
CoE-R & 22.71 & 4.07 & \textcolor{darkgreen}{-18.64} & 74.43 & 75.30 & \textcolor{darkgreen}{0.87} & 40.04 & 4.99 & \textcolor{darkgreen}{-35.05} & 59.86 & 60.77 & \textcolor{darkgreen}{0.91} & 31.08 & 3.53 & \textcolor{darkgreen}{-27.54} & 65.55 & 80.35 & \textcolor{darkgreen}{14.80} \\
Eigenscore & 27.70 & 12.15 & \textcolor{darkgreen}{-15.56} & 68.37 & 78.05 & \textcolor{darkgreen}{9.67} & 20.57 & 7.20 & \textcolor{darkgreen}{-13.37} & 69.81 & 66.94 & \textcolor{darkred}{-2.87} & 65.55 & 3.37 & \textcolor{darkgreen}{-62.18} & 40.44 & 58.29 & \textcolor{darkgreen}{17.85} \\
Semantic Entropy & 59.45 & 6.58 & \textcolor{darkgreen}{-52.86} & 52.08 & 65.69 & \textcolor{darkgreen}{13.61} & 66.03 & 6.16 & \textcolor{darkgreen}{-59.87} & 43.86 & 64.07 & \textcolor{darkgreen}{20.21} & 78.07 & 3.23 & \textcolor{darkgreen}{-74.84} & 49.36 & 55.30 & \textcolor{darkgreen}{5.94} \\

\midrule
\headercolor
\multicolumn{19}{c}{\textbf{Llama-3.1-8B}} \\

MSP & 30.31 & 8.53 & \textcolor{darkgreen}{-21.78} & 75.79 & 75.79 & 0.00 & 46.84 & 6.49 & \textcolor{darkgreen}{-40.35} & 81.93 & 82.39 & \textcolor{darkgreen}{0.46} & 47.85 & 5.61 & \textcolor{darkgreen}{-42.24} & 75.42 & 77.36 & \textcolor{darkgreen}{1.94} \\
Perplexity & 29.34 & 7.96 & \textcolor{darkgreen}{-21.37} & 73.87 & 73.87 & 0.00 & 44.29 & 9.91 & \textcolor{darkgreen}{-34.38} & 81.88 & 81.57 & \textcolor{darkred}{-0.31} & 58.16 & 13.63 & \textcolor{darkgreen}{-44.53} & 71.12 & 71.12 & 0.00 \\
Entropy & 20.53 & 11.66 & \textcolor{darkgreen}{-8.87} & 76.91 & 76.91 & 0.00 & 33.65 & 6.38 & \textcolor{darkgreen}{-27.26} & 82.31 & 82.43 & \textcolor{darkgreen}{0.12} & 34.24 & 22.18 & \textcolor{darkgreen}{-12.06} & 77.40 & 77.40 & 0.00 \\
Energy & 10.21 & 8.52 & \textcolor{darkgreen}{-1.69} & 70.53 & 70.53 & 0.00 & 17.64 & 4.28 & \textcolor{darkgreen}{-13.36} & 67.54 & 65.14 & \textcolor{darkred}{-2.40} & 10.57 & 6.91 & \textcolor{darkgreen}{-3.66} & 76.97 & 76.56 & \textcolor{darkred}{-0.41} \\
P(True) & 38.56 & 4.12 & \textcolor{darkgreen}{-34.44} & 62.16 & 62.21 & \textcolor{darkgreen}{0.05} & 53.99 & 4.18 & \textcolor{darkgreen}{-49.81} & 62.86 & 65.90 & \textcolor{darkgreen}{3.04} & 57.01 & 17.04 & \textcolor{darkgreen}{-39.96} & 80.66 & 80.66 & 0.00 \\
VC & 38.98 & 2.16 & \textcolor{darkgreen}{-36.82} & 62.87 & 63.62 & \textcolor{darkgreen}{0.75} & 57.62 & 4.20 & \textcolor{darkgreen}{-53.42} & 67.74 & 68.67 & \textcolor{darkgreen}{0.93} & 35.24 & 17.15 & \textcolor{darkgreen}{-18.09} & 75.97 & 76.15 & \textcolor{darkgreen}{0.18} \\
CoE-C & 46.47 & 8.89 & \textcolor{darkgreen}{-37.59} & 62.43 & 63.08 & \textcolor{darkgreen}{0.64} & 31.04 & 6.20 & \textcolor{darkgreen}{-24.84} & 66.22 & 66.79 & \textcolor{darkgreen}{0.58} & 19.90 & 3.53 & \textcolor{darkgreen}{-16.37} & 43.10 & 58.30 & \textcolor{darkgreen}{15.20} \\
CoE-R & 16.40 & 3.92 & \textcolor{darkgreen}{-12.48} & 54.60 & 65.19 & \textcolor{darkgreen}{10.60} & 20.65 & 4.19 & \textcolor{darkgreen}{-16.46} & 38.01 & 63.98 & \textcolor{darkgreen}{25.97} & 19.24 & 6.31 & \textcolor{darkgreen}{-12.93} & 68.03 & 75.92 & \textcolor{darkgreen}{7.89} \\
Eigenscore & 9.63 & 4.75 & \textcolor{darkgreen}{-4.88} & 76.42 & 76.76 & \textcolor{darkgreen}{0.34} & 12.89 & 9.81 & \textcolor{darkgreen}{-3.08} & 83.16 & 82.69 & \textcolor{darkred}{-0.47} & 21.09 & 3.60 & \textcolor{darkgreen}{-17.49} & 69.30 & 75.57 & \textcolor{darkgreen}{6.28} \\
SE & 40.39 & 3.85 & \textcolor{darkgreen}{-36.54} & 42.88 & 68.68 & \textcolor{darkgreen}{25.80} & 58.44 & 3.65 & \textcolor{darkgreen}{-54.78} & 49.34 & 74.63 & \textcolor{darkgreen}{25.29} & 68.49 & 3.21 & \textcolor{darkgreen}{-65.28} & 53.25 & 60.28 & \textcolor{darkgreen}{7.03} \\

\bottomrule
\end{tabular}
}

\caption{Additional results with Meta-Llama models.}

\label{tab:add_results}
\end{table*}

\paragraph{More LLMs.} \Cref{subsec:more_models} shows the performance with Llama-3.2-3B, Llama-3.1-8B \citep{grattafiori2024llama}.

\paragraph{Noisy and few-shot supervision.} \Cref{fig:ds,fig:cr} plot the individual TAC scores for each few-shot or noisy setting for TriviaQA.

\begin{figure}
    \centering
    \includegraphics[width=\linewidth]{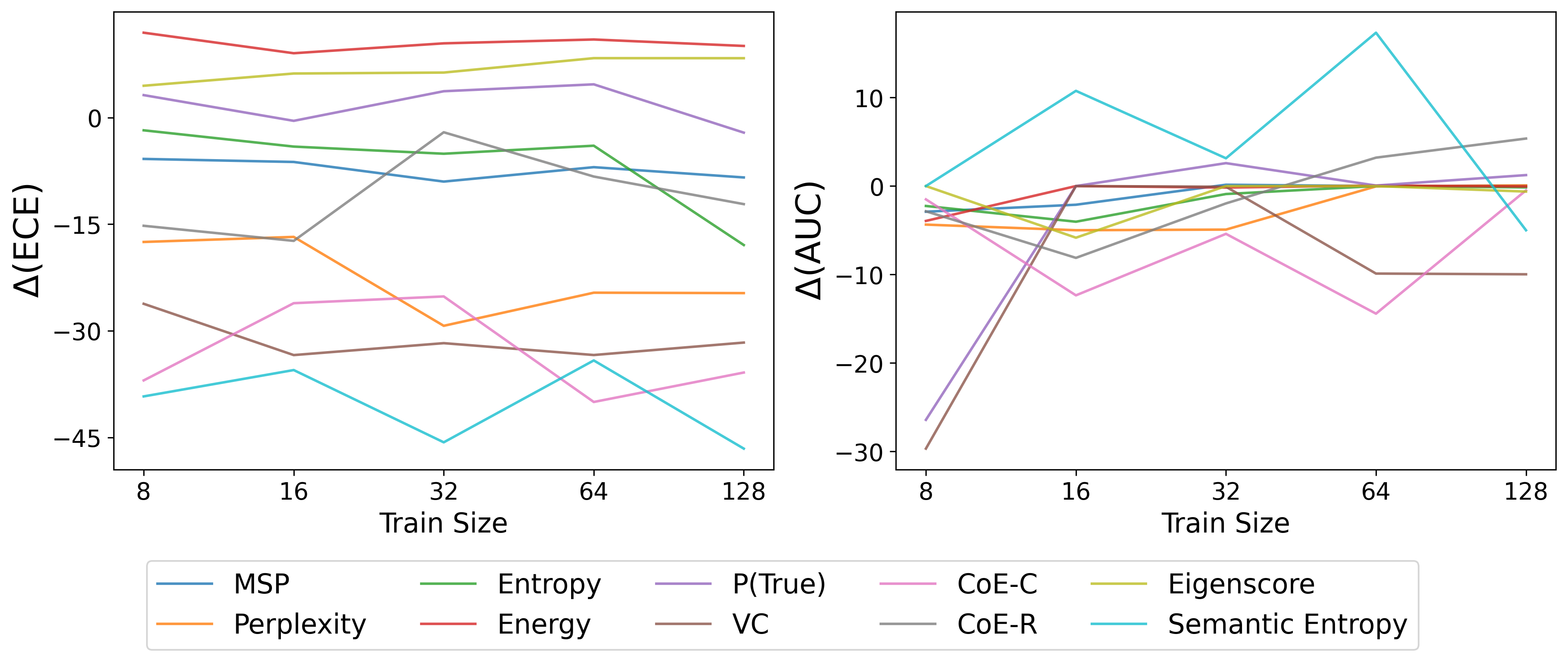}
    \caption{$\Delta$(ECE) and $\Delta$(AUC) on TriviaQA with few-shot supervision.}
    \label{fig:ds}
\end{figure}

\begin{figure}
    \centering
    \includegraphics[width=\linewidth]{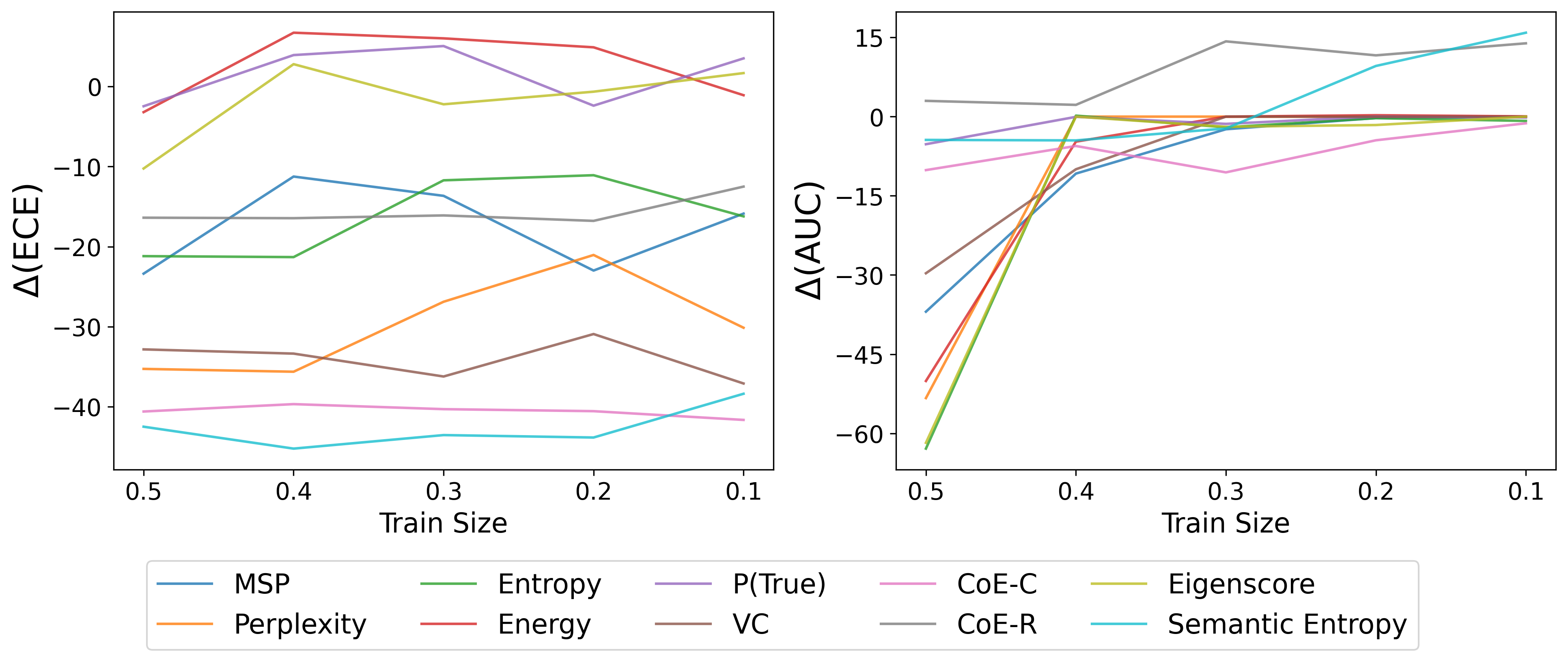}
    \caption{$\Delta$(ECE) and $\Delta$(AUC) on TriviaQA with noisy supervision.}
    \label{fig:cr}
\end{figure}

\end{document}